\documentclass{article}

\usepackage{amsmath,amssymb,amsfonts,amsthm}
\usepackage[numbers]{natbib}
\usepackage[colorlinks,citecolor=blue,urlcolor=blue]{hyperref}
\usepackage[utf8]{inputenc}
\usepackage[T1]{fontenc}
\usepackage{microtype}
\usepackage{fullpage}
\usepackage{mathrsfs}
\usepackage{dsfont}
\usepackage{todonotes}
\usepackage{subcaption}

\newtheorem{definition}{Definition}[section]
\newtheorem{lemma}{Lemma}[section]
\newtheorem{theorem}{Theorem}[section]
\newtheorem{proposition}{Proposition}[section]

\newtheorem{remark}{Remark}[section]

\newcommand{\eqdef }{\ensuremath{\stackrel{\mbox{\upshape\tiny def.}}{=}}}
\newcommand{\ReLU}{\operatorname{ReLU}}
\newcommand{\ICG}{\mathrm{IC_G}}
\newcommand{\ICR}{\mathrm{IC_R}}
\newcommand{\AG}{\mathrm{A_G}}
\newcommand{\AR}{\mathrm{A_R}}

\usepackage{xcolor}
\definecolor{almostblack}{RGB}{20,20,20}
\definecolor{charcoalblack}{RGB}{28,28,28}

\usepackage[most]{tcolorbox}

\newtcolorbox{questionbox}{
  enhanced,
  hbox,
  colback=gray!6,
  colframe=gray!50,
  boxrule=0.4pt,
  arc=1.5mm,
  left=3mm,
  right=3mm,
  top=1.2mm,
  bottom=1.2mm,
  before=\begin{center},
  after=\end{center}
}
\usepackage{tikz}
\usetikzlibrary{arrows.meta}
\usepackage{xcolor,amsmath}

\definecolor{nearblack}{RGB}{20,20,20}
\definecolor{lightgraytext}{RGB}{175,175,175}
\definecolor{gridgray}{RGB}{190,190,190}

\definecolor{myblue}{RGB}{35,85,205}
\definecolor{myred}{RGB}{215,60,50}

\definecolor{hatone}{RGB}{170,190,70}
\definecolor{hattwo}{RGB}{215,70,55}
\definecolor{hatthree}{RGB}{60,160,60}
\definecolor{hatfour}{RGB}{235,155,45}
\definecolor{hatfive}{RGB}{200,55,50}
\definecolor{hatsix}{RGB}{220,120,50}

\definecolor{levelone}{RGB}{85,150,60}
\definecolor{leveltwo}{RGB}{245,150,30}
\definecolor{levelthree}{RGB}{235,95,35}
\definecolor{levelfour}{RGB}{210,35,35}

\title{Adaptivity Under Realizability Constraints:\\
Comparing In-Context and Agentic Learning}

\author{Anastasis Kratsios\thanks{McMaster University and the Vector Institute, Main St., Hamilton, L8S 4L8, Ontario, Canada, email: \texttt{kratsioa@mcmaster.ca}} \footnotemark[4] \and A. Martina Neuman \thanks{University of Vienna, Faculty of Mathematics, Kolingasse 14-16,
    1090 Wien, Austria, 
    e-mail: \texttt{anh.martina.neuman@univie.ac.at}
  } \footnotemark[4] \and Philipp Petersen \thanks{
    University of Vienna,
    Faculty of Mathematics and Research Network Data Science @ Uni Vienna, Kolingasse 14-16,
    1090 Vienna, Austria, 
    e-mail: \texttt{philipp.petersen@univie.ac.at}
  } \thanks{All authors contributed equally}}
\date{May 2026}

\begin{document}

\maketitle

\begin{abstract}
We compare in-context learning with fixed queries and 
agentic learning with adaptive queries for uniform
approximation of task families. 
We consider two settings: 
an unrestricted regime,  
where querying and approximation are  
arbitrary functions, and a realizable regime, 
where we require these operations to be implemented by ReLU neural networks. 
In both settings, adaptivity never hinders approximation performance. 
However, this advantage can change when one passes from the unrestricted regime to the realizable regime.
We identify four distinct approximation scenarios, each witnessed by an explicit task family: (a) no advantage of adaptivity; (b) an advantage in the unrestricted regime that persists under ReLU realizability; (c) an advantage that arises only under realizability; and (d) an advantage that disappears under realizability.
This demonstrates that representational constraints interact profoundly with the effect of adaptivity.
\end{abstract}

\section{Introduction}

Modern learning systems exhibit 
well-documented few-shot~\cite{brown2020,li2025transformers} and \emph{multi-task}~\cite{Caruana1997Multitask,ArgyriouEvgeniouPontil2008ConvexMTL,MishraEtAl2022NaturalInstructions,WangEtAl2022SuperNaturalInstructions} learning capabilities. 
These recent advances are largely attributed to their ability to access task information through prompts \cite{brown2020}, retrieved examples \cite{lewis2020rag}, memory \cite{park2023generative}, or sequential interaction with an environment \cite{yao2023react}. 
Fixed prompts are an instance of \emph{in-context learning} (ICL), while sequential interaction leads to \emph{agentic learning}. 
For a mathematically tractable treatment, we adopt a viewpoint in which sequential interaction consists of adaptive query selection based on past observations, whereas prompts are fixed queries specified a priori.
The central question of this article is simple: \emph{when does sequential interaction actually help, and how does that answer change once realizability constraints are imposed?}

To this end, we compare four regimes, defined formally in Section~\ref{sec:setup}:
\begin{align*}
&\ICG \;\text{(general in-context)},&
\qquad
&\ICR \;\text{(ReLU in-context)},\\
&\AG \,\,\;\text{(general agentic)},&
\qquad
&\AR \,\,\;\text{(ReLU agentic)}.
\end{align*}
For our purposes, an in-context learner is a function that can access a fixed (but arbitrary) collection of samples, whereas an agentic learner (or ``agent'') selects its samples adaptively based on past observations.
Both learners 
are required to approximate an entire given class of functions. 
Along a second axis, we distinguish between 
\emph{general} approximators and \emph{ReLU-realizable} (or simply \emph{realizable}) approximators. 
A general approximator is subject to no representational restriction; a realizable one requires both the querying mechanism and the final predictor to be implemented by bounded-size ReLU networks.
Formal definitions are provided in Section \ref{sec:setup}. 

To understand the relationship in terms of approximation power between the introduced regimes, 
we consider uniform approximation over a given \emph{full function class} and assume equal sample budgets and, where applicable, equal computational budgets for the underlying ReLU networks.
An immediate observation is the monotonicity
\begin{align}
\label{eq:adaptivityCannotHurtAnyone}
\ICG\leq \AG,
\quad\text{ and }\quad
\ICR\leq \AR,
\end{align}
which holds for \emph{all} function classes. 
Here ``$\leq$'' means that the method on the right does not incur a larger worst-case uniform approximation error than the one on the left. 
The inequality \eqref{eq:adaptivityCannotHurtAnyone} follows from the fact that adaptive sampling can always be ignored, so an agent can imitate any fixed sampling rule.
We formalize this fundamental fact in Proposition~\ref{prop:monotonicity}.
In the sequel, we will also use the symbol ``$=$'' to denote that both methods yield the same worst-case error (over an underlying function class) and $<$ to denote that the method on the right yields a strictly smaller worst-case error than the one on the left. 
We give a formal definition of these relations in Definition \ref{def:formalDefRelations}.

Apart from the case $\ICG=\AG$ and $\ICR=\AR$---which already occurs for every singleton task
family---
in which adaptivity yields no improvement, only three nontrivial configurations remain:

\begin{center}
\begin{tabular}{lcc}
\hline
\textbf{Phenomenon} & \textbf{General} & \textbf{ReLU-realizable} \\
\hline
adaptivity advantage survives realizability & $\ICG<\AG$ & $\ICR<\AR$ \\
realizability generates adaptivity advantage & $\ICG=\AG$ & $\ICR<\AR$ \\
adaptivity advantage removed by realizability & $\ICG<\AG$ & $\ICR=\AR$ \\
\hline
\end{tabular}
\end{center}

In the main results, we isolate all three of these nontrivial configurations.
Concretely,
Theorems~\ref{thm:path}, \ref{thm:value}, and \ref{thm:address} identify function classes realizing each of these patterns, which may be distinguished as follows: 
\begin{itemize}
    \item \emph{adaptivity reveals more information}: on \emph{cubical-path} tasks, adaptive querying
    reveals strictly more task information than every fixed context, and this advantage survives the
    ReLU-realizability restriction; see Section~\ref{s:Main__ss:pathada_realizability};
    \item \emph{adaptivity bypasses internal computation}: on the \emph{pointed-value} family,
    unrestricted in-context and unrestricted agentic learning are equally powerful, but a ReLU-realizable
    agentic learner can bypass a hard internal computation that a ReLU-realizable in-context learner needs to perform; 
    see Section~\ref{s:Main__ss:realizabilityonly};
    \item 
    \emph{adaptivity is limited by hard computation}:
    on the \emph{address-spike} family,
    unrestricted adaptivity provides strict approximation gains, 
    but now the informative adaptive query location is computationally hard to determine; 
    consequently, 
    bounded-size 
    ReLU agents 
    lose these gains; 
    see
    Section~\ref{s:Main__ss:killed-by-realizability}.
\end{itemize}

This yields the promised conceptual classification: \emph{once realizability is taken into account,
adaptivity advantages can survive, appear, or disappear}.

\section{Related work}

\paragraph{In-context learning.}
Our notion of in-context learning is motivated by the modern few-shot prompting paradigm for large
language models, where task information is supplied through examples or instructions at inference time rather than through parameter updates \cite{brown2020}. 
On the theoretical side, recent work studies what 
function classes can learn from such fixed in-context examples for \emph{structured families}, typically exploiting shared low complexity across tasks, particularly in transformer architectures \cite{akyurek2023what, garg2022}. 
While related in spirit, we use structured families from a different perspective: to highlight the role of adaptive information acquisition and to characterize how realizability alters its advantage.
In our formulation, this corresponds to an \emph{agent}, possibly realized by a ReLU network, that adaptively selects which inputs to query and updates its predictor based on the observed responses.

We distinguish our approximation-theoretic lens from the statistical lens that typically dominates the ICL literature. In particular, our focus differs from work on scaling laws~\cite{bordelon2025theory}, simplified attention mechanisms that learn linear tasks in context~\cite{wu2023many,zhang2024trained}, the low-frequency inductive bias of trained transformers~\cite{yang2025provable}, and algorithmic perspectives in which ICL implements an implicit learning algorithm within the forward pass~\cite{vonoswald2023what} or, more broadly, a meta-learning procedure that maps in-context examples directly to a predictor or its parameters~\cite{chen2022transformers}.
These approaches operate on a given prompt of in-context examples and therefore do not address adaptive information acquisition or how it interacts with architectural constraints and realizability.

\paragraph{Adaptive sampling, active learning, and information-based complexity.}
The adaptive 
component of our learning model is closely related to classical \emph{query learning} and \emph{active learning}, where a learner chooses which information to reveal 
based on past observations, viewed as sequentially acquired examples.
Membership- and equivalence-query models already highlight the power of adaptive access to an unknown target through sharp bounds on \emph{query complexity}, see e.g. \cite{angluin1987}, while the active learning literature studies how 
sequential sample selection reduces label complexity for generalization \cite{cohn1994,dasgupta2011}. 
Adaptive sampling frameworks similarly study how measurements can be chosen on-the-fly to improve estimation accuracy for structured signal classes \cite{castro2009active, sung1994active}.
More broadly, 
our setting is most closely aligned with the framework of \emph{information-based complexity}, which studies the
intrinsic difficulty of approximation 
under partial information and restricted
access to the target object. 
This framework was formalized in the 1980s in \cite{packel1987ibc, traub1988ibc} and further developed in \cite{novak2008tractability}, where approximation under partial information is modeled via prescribed \emph{information operators} (typically a collection of linear or nonlinear functionals), and complexity is measured by the minimal achievable error.
Modern developments consider a range of information models, including limited bit encodings \cite{kolmogorov1959varepsilon}, linear measurements \cite{pinkus2012n, krieg2021function, adcock2025optimal}, and nonlinear (e.g., $1$-Lipschitz) queries \cite{cohen2022optimal}.

\paragraph{Multi-tasking and connections to operator learning.}
The shift from the classical universal approximation perspective, where only a single task is approximated, to the approximation of task families remains only partially explored in ICL. 
This places our work within the broader context of approximation theory for \emph{operator learning}, where many results admit an interpretation as ``multi-task'' approximation guarantees~\cite{furuya2025one, KovachkiEtAl2023NeuralOperator, KovachkiLanthalerMishra2021FNOTheory, LanthalerMishraKarniadakis2022DeepONetError, kratsios2025generative, LiEtAl2021FNO}.
Those results, together with the first ICL approximation result of~\cite{li2025transformers}, suggest that multi-task learning may be subject to significant 
information-theoretic bottlenecks, 
similar to those encountered in neural operator approaches to simultaneously solving families of scientific computing problems~\cite{lanthaler2026parametric}. 
The objective of our paper, however, is not to characterize worst-case scaling laws of arbitrary multi-task approximation frameworks, but to isolate how adaptive information acquisition interacts with realizability constraints.
This perspective extends classical results on non-adaptive approximation by MLPs~\cite{Barron1993UniversalApproximation,mhaskar2017and}, suggesting that beyond function class complexity, the structure of information acquisition---such as adaptivity---also plays a central role.

\paragraph{Approximation-theoretic setting.}
The contribution of this note is to compare two axes---adaptivity and ReLU realizability---within a uniform approximation framework. This allows us to study how adaptivity interacts with representational constraints, which is central to the phenomena studied here. Imposing realizability constraints, in turn, places the discussion in the well-studied setting of ReLU networks, where both upper and lower bounds are available; see~\cite{petersen2024mathematical} for an overview.

We note that many of our results can be extended to multi-head transformers, and are therefore not MLP-specific, via our MLP-to-transformer conversion; see Proposition~\ref{prop:transformerification__SparseVersion} and the discussion in Remark \ref{rem:transformers}. 
This mirrors~\cite[Proposition 11]{kratsios2025context} for our standard notion of attention, and is consistent with similar conversion results for CNNs~\cite{petersen2020equivalence}, spiking neural networks~\cite{dold2026equivalence, neuman2025stable}, and transformers~\cite{hu2026transformerapproximationsrelus}. 

\section{Setup}\label{sec:setup}

Throughout, $\mathcal X\subseteq \mathbb R^d$ is compact, $\mathcal{T}\subseteq\{f:\mathcal X\to
\mathbb R\}$ is a family of tasks, and approximation is measured in the uniform norm
$\|\cdot\|_{L^\infty(\mathcal X)}$.  We consider $\operatorname{ReLU}$-MLPs and transformers with multi-head attention and $\operatorname{ReLU}$ activation functions; see Appendix~\ref{s:models} for standard definitions. 

Let $f\in\mathcal T$. For $n\in\mathbb N$ and query points $x_1,\dots,x_n\in\mathcal X$, we write
\[
C_n(f;x_1,\dots,x_n)\eqdef  ((x_i,f(x_i)))_{i=1}^n
\]
for the ordered \emph{context} gathered from the task $f$. 
Since we study the limits of 
adaptivity under realizability constraints, we do not include noise in the measurements $f(x_i)$, so as to isolate the core phenomena.
This is in part in line with standard approximation theory~\cite{yarotsky2017,petersen2018optimal,kratsios2022universal}.
Uniform (worst-case) approximation results for neural networks are typically formulated in this noise-free setting, while quantitative results incorporating noise remain comparatively limited; see, e.g., \cite{kratsios2025beyond}.

\begin{definition}[In-context learners]
Fix a sample budget $N\in\mathbb N$. A \emph{general in-context learner} $\Psi_{\rm IC}$ consists of fixed query points $x_1,\dots,x_N\in\mathcal X$ and a predictor
\[
 \widehat{F}_{\Psi_{\rm IC}}:(\mathcal X\times\mathbb R)^N\times\mathcal X\to\mathbb R,
\]
which, for a task $f\in\mathcal T$, outputs the function
\[
x\mapsto \Psi_{\rm IC}(f)(x) \eqdef 
\widehat{F}_{\Psi_{\rm IC}}
(C_N(f;x_1,\dots,x_N),x).
\]
We say that $\Psi_{\rm IC}$ is a \emph{ReLU-realizable in-context learner} (or \emph{realizable}) if it is represented by a bounded-size ReLU network. 
The size (i.e.\ number of non-zero parameters or network weights) of the ReLU network involved is defined as the size of $\Psi_{\rm IC}$.
\end{definition}

\begin{definition}[Agentic learners]
Fix a query budget $N\in\mathbb N$. A \emph{general agentic learner} $\Psi_{\rm A}$ consists of an initial
query point $x_1\in\mathcal X$, adaptive query maps
\begin{equation} \label{querymaps}
    q_n:(\mathcal X\times\mathbb R)^{n-1}\to\mathcal X,
    \qquad n=2,\dots,N,
\end{equation}
and a final predictor
\[
\widehat{F}_{\Psi_{\rm A}}:(\mathcal X\times\mathbb R)^N\times\mathcal X\to\mathbb R.
\]
For a task $f\in\mathcal T$, the resulting query sequence is defined recursively by
\[
x_1(f)\eqdef  x_1,
\qquad
x_n(f)\eqdef  q_n\bigl(C_{n-1}(f;x_1(f),\dots,x_{n-1}(f))\bigr),
\quad n=2,\dots,N,
\]
and the learner $\Psi_{\rm A}$ outputs the function
\begin{equation} \label{finpredictor}
    x\mapsto \Psi_{\rm A}(f)(x) \eqdef 
 \widehat{F}_{\Psi_{\rm A}}
 \bigl(C_N(f;x_1(f),\dots,x_N(f)),x\bigr).
\end{equation}
We say that $\Psi_{\rm A}$ is a \emph{ReLU-realizable agentic learner} (or \emph{realizable}) if the adaptive query maps and final predictor can all be
implemented by bounded-size ReLU networks. 
The size (i.e.\ number of non-zero parameters or network weights) of the largest of the involved ReLU networks is called the size of $\Psi_{\rm A}$.
\end{definition}

\begin{remark}
We do \emph{not} distinguish notationally between general learners, in-context or agentic, and their realizable counterparts.
In particular, the symbols $\Psi_{\rm IC}$ and $\Psi_{\rm A}$ may denote either type.
\end{remark}

The complexity of our networks is quantified via resource budgets. We distinguish between a \emph{query budget} $N$, corresponding to the number of samples, and a \emph{weight budget} $m$, corresponding to the network size (for realization). 
In the realizable regimes $(\ICR, \AR)$, both budgets, i.e. $(N,m)$, are imposed. In contrast, for the general regimes $(\ICG, \AG)$ no restriction on the class of functions used is imposed, so a weight budget is not applicable. There, only the query budget $N$ is constrained. All comparisons are made under matching query budgets, and, when applicable, matching weight budgets.
We now define a comparison relation between the regimes $\ICG, \AG, \ICR$, and $\AR$. 
%

\begin{definition} \label{def:formalDefRelations}
Fix a task family $\mathcal{T}$, a query budget $N\in\mathbb N$, and a weight budget $m\in\mathbb N$.
For $\{\mathsf R_1,\mathsf R_2\}\in \{\{\ICG,\AG\}, \{\ICR, \AR\}, \{\ICG,\ICR\}, \{\AG, \AR\}\}$, we write
\begin{equation*} 
    \mathsf R_1\leq \mathsf R_2
\end{equation*}
if for every learner $\Psi_1$ 
in regime $\mathsf R_1$ with the prescribed budgets there exists a learner $\Psi_2$ 
in regime $\mathsf R_2$ with the same prescribed budgets such that
\[
\sup_{f\in\mathcal T}\|\Psi_2(f)-f\|_{L^{\infty}(\mathcal X)}
\leq
\sup_{f\in\mathcal T}\|\Psi_1(f)-f\|_{L^{\infty}(\mathcal X)}.
\]
Here, the budgets are understood as follows: 
\begin{itemize}
    \item in the general regimes $\{\ICG, \AG\}$ or in the cross-type regimes $\{\ICG,\ICR\}, \{\AG, \AR\}$, the query budget $N$ is imposed;
    \item in the realizable regimes $\{\ICR, \AR\}$, both query and weight budgets $(N,m)$ are imposed.
\end{itemize}
We write $\mathsf R_1=\mathsf R_2$ if both $\mathsf R_1\leq \mathsf R_2$ and $\mathsf R_2\leq \mathsf R_1$ hold, and we write $\mathsf R_1<\mathsf R_2$ if $\mathsf R_1\leq \mathsf R_2$ holds but $\mathsf R_2\leq \mathsf R_1$ does not.
\end{definition}

We emphasize that all relations are defined relative to a fixed task family and the relevant resource budgets; in particular, no comparison is meaningful without specifying them.

\begin{remark}
Technically the relation $\mathsf R_1\leq \mathsf R_2$ from Definition~\ref{def:formalDefRelations} can be extended to the case $\{\mathsf R_1,\mathsf R_2\}\in \{\{\ICG,\AR\}, \{\ICR, \AG\}\}$ under appropriate budget imposition. We do not directly consider these relations here, as they confound the effects of adaptivity and realizability and therefore do not isolate the contribution of adaptivity.
\end{remark}

\begin{remark} \label{rem:trivialRelation}
We note for future reference the obvious relations, 
\begin{align*}
    \ICR \leq \ICG \quad \text{ and } \quad \AR \leq \AG,
\end{align*}
which follow from Definition~\ref{def:formalDefRelations}, since every realizable learner is a special case of a general learner.
\end{remark}

Our arguments often require a hard-to-approximate function. The following remark establishes the existence of one. 

\begin{remark}\label{rem:unattainableFunctions}
We shall repeatedly use the following standard fact: for every compact domain $\mathcal X\subseteq\mathbb R^d$, every weight budget $m$, and every compact interval $[a,b]$, there exists a continuous function with values in $[a,b]$ such that no ReLU network with at most $m$ weights can approximate it uniformly with error strictly smaller than $(b-a)/2$. To see this, assume toward a contradiction that every continuous function $f:\mathcal X\to[a,b]$ admits a ReLU network $\Phi_f$ with worst-case error smaller than $(b-a)/2$. 
Then for every $K \in \mathbb N$, every set of distinct points $(x_i)_{i=1}^K\subseteq \mathcal X$, and every label vector $(y_i)_{i=1}^K \in \{0,1\}^K$, there exists a continuous function $f:\mathcal X\to[0,1]$ such that $f(x_i)=y_i$ for all $i$. By assumption, the network $\Phi_{a+(b-a)f}$ then satisfies $\Phi_{a+(b-a)f}(x_i) > a+(b-a)/2$ if $y_i = 1$ and $\Phi_{a+(b-a)f}(x_i) < a+(b-a)/2$ if $y_i = 0$. Hence the class of ReLU networks with at most $m$ weights, composed with thresholding at $a+(b-a)/2$, shatters every finite set of distinct points in $\mathcal X$. This implies infinite VC dimension, contradicting \cite[Theorem 8.4]{AnthonyBartlett1999}. 
\end{remark}

In the sequel, we also occasionally use the following result: the existence of an $\varepsilon$-uniform $\operatorname{ReLU}$-MLP approximator $\operatorname{Mult}_{\varepsilon}$ of the binary multiplication function on $[0,1]^2$ 
with $\mathcal{O}(\log(1/\varepsilon))$ weights. The formal result is stated in Lemma~\ref{lem:approx-mult} and is due to~\cite{yarotsky2017}.

We close this section with two remarks on the limitations of this setup.

\begin{remark}
We focus on ReLU feed-forward neural networks because they provide a standard and well-understood
baseline in approximation theory. Most of the qualitative arguments of this paper extend to other
activation functions, provided three ingredients remain available:
\begin{itemize}
    \item One should exclude pathological activations for which fixed finite architectures already
    become universal approximators; see \cite{maiorov1999lower}.
    \item The activation should allow the construction of compactly supported bump functions as used in our separation examples. Higher-order ReLU
    activations are natural examples; for a general framework under fairly broad assumptions 
    see \cite{guhring2021encodable}.
    \item For the constructions in Section~\ref{s:Main__ss:realizabilityonly}, one also needs efficient
    approximations to the multiplication operator. 
    Besides ReLU, this is available for essentially all smooth activations, for
    example when the activation has two non-vanishing derivatives on an open interval; see the proof of \cite[Proposition 3.4]{pinkus1999mlp}.
\end{itemize}
\end{remark}

\begin{remark}\label{rem:transformers}
Typically, agentic methods are implemented with transformer architectures rather than feed-forward
ReLU networks. However, ReLU network approximations can be translated into ReLU-transformers;
see \cite{furuya2025transformers} and the quantitative version \cite[Proposition~11]{kratsios2025context}. 
Here we use a slightly different notion of transformers compared to \cite[Proposition~11]{kratsios2025context} and therefore demonstrate an adapted version of the transfer result in Proposition  \ref{prop:transformerification__SparseVersion}.

Hence, the in-context and agentic approximation results established here for ReLU networks also yield corresponding upper bounds for ReLU-based transformer architectures. To extend the separation statements in the same way, one would also need an unattainable target function for the relevant transformer class. 
It follows from \cite[Theorem 8.14]{AnthonyBartlett2009NeuralNetworkLearning} that classifiers based on thresholding the output of transformers with ReLU activation functions have finite VC dimension. By the same argument as in Remark~\ref{rem:unattainableFunctions}, this shows that for a fixed-architecture transformer there exists an unattainable target function.
\end{remark}

\section{Main Results}
\label{s:Main}
\subsection{Monotonicity of adaptivity}
\label{s:Main__ss:Ad}

As noted in the introduction, 
adaptivity does not degrade
a learner's approximation performance 
in either the general setting or under ReLU realizability. 
For completeness, we record this formally below. 

\begin{proposition}[Monotonicity of adaptivity]
\label{prop:monotonicity}
For every task family and every prescribed budget, we have 
\begin{equation} \label{monotone}
    \ICG\leq \AG
    \quad \text{ and } \quad
    \ICR\leq \AR.
\end{equation}
\end{proposition}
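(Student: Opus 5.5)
The plan is to show directly that every in-context learner can be simulated exactly by an agentic learner with the same budgets. Fix a task family $\mathcal T$, a query budget $N$, and, in the realizable case, a weight budget $m$. By Definition~\ref{def:formalDefRelations}, it suffices to start from an arbitrary in-context learner $\Psi_{\rm IC}$ with fixed queries $x_1,\dots,x_N$ and predictor $\widehat F_{\Psi_{\rm IC}}$. We then construct an agentic learner $\Psi_{\rm A}$ with $\Psi_{\rm A}(f)=\Psi_{\rm IC}(f)$ for every $f\in\mathcal T$. The two worst-case errors then coincide, so the required inequality holds with equality.

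\textbf{Construction.} Take the initial query to be $x_1$. For $n=2,\dots,N$, let $q_n$ be the constant map $q_n\equiv x_n$ on $(\mathcal X\times\mathbb R)^{n-1}$, and set $\widehat F_{\Psi_{\rm A}}\eqdef \widehat F_{\Psi_{\rm IC}}$. An induction on $n$ gives $x_n(f)=x_n$ for all $f$. Hence $C_N(f;x_1(f),\dots,x_N(f))=C_N(f;x_1,\dots,x_N)$, and by \eqref{finpredictor} the outputs agree pointwise. In the general regime this already proves $\ICG\le\AG$, because no representational constraint is imposed.

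\textbf{Realizable case.} We must also check that $\Psi_{\rm A}$ is ReLU-realizable within the weight budget $m$. The predictor is the same network as before, so it has size at most $m$. Each constant query map $q_n\equiv x_n\in\mathbb R^d$ is realized by a one-layer network with zero weight matrix and bias $x_n$. Its number of nonzero parameters is at most $d$, and is zero if $x_n=0$.

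The main obstacle is bookkeeping: the size of $\Psi_{\rm A}$ is defined as the maximum over its component networks. We need each $q_n$'s size to be at most $m$, i.e.\ at most the size of $\Psi_{\rm IC}$. This holds if the in-context network itself hard-codes the query locations as parameters, which is how a realizable in-context learner ``consists of'' fixed query points. If instead one only counts the predictor's weights, then $d\le m$ is still immediate whenever the predictor is non-degenerate. I would argue this by noting that the network must read the $d$-dimensional evaluation point $x$. Alternatively, I would adopt the convention that constant maps cost nothing. Either way, the agentic learner meets the prescribed budgets, which yields $\ICR\le\AR$.
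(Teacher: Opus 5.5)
Your construction (constant query maps $q_n\equiv x_n$ with the unchanged predictor) is the paper's proof, which handles the realizable case simply by ``reusing the same ReLU network'' and so tacitly assumes the constant query maps fit within the weight budget. Your budget bookkeeping is more careful than the paper's, but your ``non-degenerate predictor'' route only covers predictors that depend on every coordinate of $x$ (a constant predictor such as $x\mapsto 1/2$ does not), so the budget check is actually settled by the hard-coded-queries reading or the free-constants convention.
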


\begin{proof}
Let a task family $\mathcal{T}$, a query budget $N\in\mathbb{N}$, and a weight budget $m\in\mathbb{N}$ be fixed.
In the general regimes, recall that only the query budget is imposed.
Let $\Psi_{\rm IC}$ be a general in-context learner with query budget $N$.
We construct a general agentic learner $\Psi_{\rm A}$ that ignores adaptivity by using constant query maps $\{q_n\}_{n=2}^N$. 
Specifically, let $x_1,\dots,x_N$ be the query points of $\Psi_{\rm IC}$.
We define the agentic queries by $x_1(f) \eqdef x_1$ and
\begin{align*}
    x_n(f) = q_n(C_{n-1}(f; x_1(f),\cdots,x_{n-1}(f)) \eqdef q_n(C_{n-1}(f; x_1,\cdots,x_{n-1})) \eqdef x_n, \quad n=2,\dots,N,
\end{align*}
for every $f\in\mathcal{T}$. 
We define the final predictor of $\Psi_{\rm A}$ to be that of $\Psi_{\rm IC}$.
Consequently, 
\begin{equation*}
    \|\Psi_{\rm A}(f) - f\|_{L^{\infty}(\mathcal{X})} =\|\Psi_{\rm IC}(f) - f\|_{L^{\infty}(\mathcal{X})}.
\end{equation*}
Thus, the first relation in \eqref{monotone} follows.
For the realizable comparison, we proceed analogously, mapping an optimal in-context learner to an agentic learner, now reusing the same ReLU network.
\end{proof}

Proposition~\ref{prop:monotonicity} 
reduces the possible relations to the cases listed in the introduction: in each case, there are only two options---strict
improvement or equality.
The following three sections 
realize all 
the remaining cases.

\subsection{Information advantage survives realizability}
\label{s:Main__ss:pathada_realizability}


\begin{figure}[hb!]
    \centering
    \begin{subfigure}[t]{0.48\textwidth}
    \centering
    \includegraphics[width=.95\linewidth]{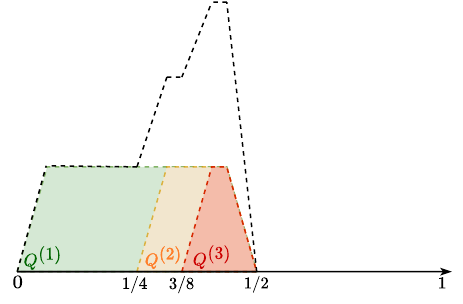}
    \caption{\textbf{The task:} A representative task $f^{\Gamma}$ from \eqref{eq:cubical_path} with $L=3$
    }
    \end{subfigure}%
    \hfill
    \begin{subfigure}[t]{0.48\textwidth}
    \centering
    \includegraphics[width=.75\linewidth]{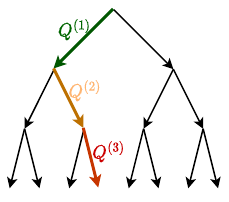}
    \caption{\textbf{Agentic behavior:} The adaptive binary-search protocol implemented by a $\operatorname{ReLU}$-MLP agent.}
    \end{subfigure}
    \caption{\textbf{Cubical-path task family $\mathcal{T}_{d,L}^{\operatorname{path}}$.}
    A representative task from \eqref{eq:cubical_path} is shown in the left figure. Each task $f^\Gamma\in \mathcal{T}_{d,L}^{\operatorname{path}}$ encodes, at every successful query, the location of the next informative query, 
    allowing an adaptive strategy to recover the path sequentially.
    In one dimension, identifying the next relevant sub-cube amounts 
    a \textit{binary search}, shown in the right figure. 
    }
    \label{fig:dyadic}
\end{figure}

We begin with an exemplary task family where adaptive querying is genuinely more informative than every fixed context, and
this advantage survives 
ReLU-realizability.

Fix $d,L\in\mathbb N$ and a parameter $\eta\in(0,1/2)$. For each level
$n\in\{1,\dots,L\}$, let $\mathcal Q_n$ be the dyadic partition of $[0,1]^d$ into cubes of side
length $2^{-n}$. For $Q\in\mathcal Q_n$, write $\ell(Q)=2^{-n}$ and $c(Q)$ for its centre. Define
the central sub-cube
\begin{equation} \label{centralcube}
    Q_\eta\eqdef  \{x\in Q:\ \|x-c(Q)\|_\infty\le \eta\ell(Q)\}
\end{equation}
and the bump function
\[
    \theta_Q(x)
    \eqdef 
    \bigg(1-\frac{\operatorname{dist}_\infty(x,Q_\eta)}{(\tfrac12-\eta)\ell(Q)}\bigg)_+.
\]
A \emph{cubical path of depth $L$} is a nested sequence
\[
\Gamma \eqdef (Q^{(1)},\dots,Q^{(L)}),
\qquad
Q^{(n)}\in\mathcal Q_n,
\qquad
Q^{(L)}\subset\cdots\subset Q^{(1)}.
\]
The associated task, denoted by $f^\Gamma$, is defined for every $x\in [0,1]^d$ to be
\begin{equation}
\label{eq:cubical_path}
f^\Gamma(x)\eqdef  \sum_{n=1}^L \theta_{Q^{(n)}}(x). 
\end{equation}
We denote by $\mathcal{T}_{d,L}^{\mathrm{path}}$ the resulting \emph{cubical-path} family of all such functions $f^{\Gamma}$. 
One example of such a function is shown in Figure \ref{fig:dyadic}.

\begin{theorem}
\label{thm:path}
For every $d,L\in\mathbb N$, the family $\mathcal{T}_{d,L}^{\mathrm{path}}$ satisfies the
following.
\begin{enumerate}
    \item[\rm (i)] There exists a ReLU-realizable agentic learner using $2^dL$ queries that
    reconstructs every task in $\mathcal{T}_{d,L}^{\mathrm{path}}$ exactly.
    \item[\rm (ii)] Every 
    in-context learner with fewer than $2^{d(L-1)}$ queries incurs
    worst-case $L^\infty([0,1]^d)$ approximation error at least $1/2$.
\end{enumerate}
Consequently, for any query budget $N$ satisfying $2^dL \leq N < 2^{d(L-1)}$, we obtain
\[
\ICG<\AG
\quad\text{ and }\quad
\ICR<\AR
\]
on the task family $\mathcal{T}_{d,L}^{\mathrm{path}}$.
\end{theorem}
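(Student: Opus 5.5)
The plan is to prove (i) by an explicit level-by-level search that is exactly piecewise linear, to prove (ii) by a fooling argument on a single unqueried cube, and then to combine the two with Proposition~\ref{prop:monotonicity} and Remark~\ref{rem:trivialRelation}.

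\textbf{Structure of the bumps.} Two facts about $\theta_Q$ are used throughout.
\begin{itemize}
\item[(a)] Since $\operatorname{dist}_\infty(x,Q_\eta)=(\|x-c(Q)\|_\infty-\eta\ell(Q))_+$, we have $\theta_Q=1$ on $Q_\eta$, and $\theta_Q>0$ only on the interior of $Q$. In particular $\theta_Q$ vanishes on $\partial Q$.
\item[(b)] For fixed level $n$, the map $(x,c)\mapsto\theta_Q(x)$ is an exact ReLU expression in $x$ and $c=c(Q)$. Indeed $|t|=\ReLU(t)+\ReLU(-t)$, $\max(a,b)=b+\ReLU(a-b)$, and $\ell=2^{-n}$ is a constant.
\end{itemize}
The key consequence concerns evaluation at centres. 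Let $Q\in\mathcal Q_n$ be a child of $Q^{(n-1)}$. The centre $c(Q)$ is a corner of every level-$(n+1)$ cube inside $Q$. Every $Q^{(k)}$ with $k>n$ lies inside $Q^{(n)}\cap Q^{(n+1)}$, so by (a) it contributes $0$ at $c(Q)$. Therefore
\[
f^\Gamma(c(Q))=\sum_{k<n}\theta_{Q^{(k)}}(c(Q))+\mathds 1[Q=Q^{(n)}].
\]

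\textbf{Part (i): the agent.} At stage $n$ the agent already knows the centres $c(Q^{(1)}),\dots,c(Q^{(n-1)})$.
\begin{itemize}
\item It queries the $2^d$ children centres $c(Q^{(n-1)})+2^{-n-1}s$ for $s\in\{\pm1\}^d$. These are fixed affine functions of the previous centre. At level $1$ they are constants, so the first of them is $x_1$.
\item For each child $s$ it forms the residual $\delta_s=f(\text{query})-\sum_{k<n}\theta_{Q^{(k)}}(\text{query})$. By the formula above $\delta_s\in\{0,1\}$ exactly, and by (b) $\delta_s$ is an exact ReLU function of context entries.
\item It sets $c(Q^{(n)})=c(Q^{(n-1)})+2^{-n-1}\sum_s\delta_s s$. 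This is an affine combination, because $s$ is a constant and exactly one $\delta_s$ equals $1$.
\end{itemize}
Each query map $q_j$ must recompute the past centres from the context, which is a finite composition of the same exact ReLU blocks. The final predictor is $x\mapsto\sum_{n=1}^L\theta_{Q^{(n)}}(x)$, which by (b) is again an exact ReLU network in $(x,\text{context})$. The total is $2^dL$ queries and zero error. Let $m_0$ be the size of this agent; the weight budget is taken to satisfy $m\ge m_0$.

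The step I expect to need the most care is verifying exactness without any approximate multiplication. The point is that all selections are linear combinations of constants weighted by the exact indicators $\delta_s$, and all ingredients are CPwL maps of fixed complexity per level. Checking the boundary claim (a), that $\theta$ vanishes on $\partial Q$, is what guarantees that deeper levels do not contaminate the centre values.

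\textbf{Part (ii): the fooling argument.} Suppose the fixed points $x_1,\dots,x_N$ satisfy $N<2^{d(L-1)}=|\mathcal Q_{L-1}|$, where $\mathcal Q_0=\{[0,1]^d\}$. Each point lies in the interior of at most one cube, so some $Q\in\mathcal Q_{L-1}$ contains no query point in its interior.
\begin{itemize}
\item Fix a path $Q^{(1)}\supset\dots\supset Q^{(L-1)}=Q$. Let $\Gamma$ and $\Gamma'$ extend it by two different children $R\neq R'$ of $Q$ in $\mathcal Q_L$.
\item Then $f^\Gamma-f^{\Gamma'}=\theta_R-\theta_{R'}$, which by (a) vanishes outside the interior of $Q$. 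Hence the two contexts coincide, and so do the outputs of any in-context learner, general or realizable.
\item At $c(R)$ we have $f^\Gamma-f^{\Gamma'}=1-0=1$. So one of the two tasks has error at least $1/2$.
\end{itemize}

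\textbf{Consequences.} Fix $N$ with $2^dL\le N<2^{d(L-1)}$ and $m\ge m_0$. Pad the agent with redundant queries to use exactly $N$ of them. This gives an $\AR$ learner with error $0$, and by Remark~\ref{rem:trivialRelation} also an $\AG$ learner with error $0$. Every $\ICG$ and every $\ICR$ learner has error at least $1/2$ by part (ii). Hence $\AG\le\ICG$ and $\AR\le\ICR$ both fail. Together with Proposition~\ref{prop:monotonicity} this yields $\ICG<\AG$ and $\ICR<\AR$.
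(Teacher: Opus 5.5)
Your proposal is correct and follows the paper's proof essentially step for step. For (i) you query the $2^d$ child centres level by level and subtract the recovered ancestor bumps to isolate the unique child on the path; for (ii) you use a fooling argument on an unqueried level-$(L-1)$ cube with two different final children; and you finish with Proposition~\ref{prop:monotonicity} and Remark~\ref{rem:trivialRelation}. Your affine centre update $c(Q^{(n-1)})+2^{-n-1}\sum_s\delta_s s$ avoids the product $\chi(\rho_j)q_j$ that appears in the paper's formula, your interior-based pigeonhole step makes explicit why query points on cube boundaries are harmless, and your explicit padding to exactly $N$ queries is also a small tightening of the paper's write-up.
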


The idea is simple. An agentic learner recursively identifies the hidden dyadic branch from the cubical path: at each level it
queries the centers of all $2^d$ children of the currently recovered cube and uses the residual
responses to determine the unique child on the path. This procedure 
is ReLU-implementable. 
On the other hand, every non-adaptive sample set with fewer than $2^{d(L-1)}$ points misses an entire cube
at level $L-1$, where two different final children of that cube yield indistinguishable contexts but
different cubical path tasks. 

The proof of Theorem~\ref{thm:path} is given in Appendix~\ref{appx:cubicalpath}.

\subsection{Realizability-only advantage}
\label{s:Main__ss:realizabilityonly}

Our second family shows that adaptivity can yield an advantage after realizability constraints, 
even when general in-context and general agentic methods have the same performance.

\begin{figure}[ht!]
\centering
\begin{tikzpicture}[>=Latex, line cap=round, line join=round, font=\small, x=11.5cm, y=2.1cm]

\draw[->, line width=0.9pt, draw=nearblack] (-0.03,0) -- (1.03,0) node[below right] {};
\draw[->, line width=0.9pt, draw=nearblack] (0,0) -- (0,1.22);

\def\d{0.04}
\def\qone{0.00}
\def\qtwo{0.10}
\def\qthree{0.20}
\def\qfour{0.30}
\def\qfive{0.40}
\def\qstar{0.82}

\def\hone{0.82}
\def\htwo{0.36}
\def\hthree{0.63}
\def\hfour{0.48}
\def\hfive{0.73}
\def\hstar{0.58}

\draw[dashed, draw=lightgraytext] (\qone,0) -- (\qone,\hone);
\draw[dashed, draw=lightgraytext] (\qtwo,0) -- (\qtwo,\htwo);
\draw[dashed, draw=lightgraytext] (\qthree,0) -- (\qthree,\hthree);
\draw[dashed, draw=lightgraytext] (\qfour,0) -- (\qfour,\hfour);
\draw[dashed, draw=lightgraytext] (\qfive,0) -- (\qfive,\hfive);
\draw[dashed, draw=lightgraytext] (\qstar,0) -- (\qstar,\hstar);

\draw[line width=1.15pt, draw=hatone] (\qone,\hone) -- ({\qone+\d},0);
\draw[line width=1.15pt, draw=hattwo] ({\qtwo-\d},0) -- (\qtwo,\htwo) -- ({\qtwo+\d},0);
\draw[line width=1.15pt, draw=hatthree] ({\qthree-\d},0) -- (\qthree,\hthree) -- ({\qthree+\d},0);
\draw[line width=1.15pt, draw=hatfour] ({\qfour-\d},0) -- (\qfour,\hfour) -- ({\qfour+\d},0);
\draw[line width=1.15pt, draw=hatfive] ({\qfive-\d},0) -- (\qfive,\hfive) -- ({\qfive+\d},0);
\draw[line width=1.15pt, draw=hatsix] ({\qstar-\d},0) -- (\qstar,\hstar) -- ({\qstar+\d},0);

\draw[->, line width=0.7pt, draw=lightgraytext] (\qtwo,\htwo+0.02) to[out=55,in=180] (\qstar,\hstar+0.02);
\draw[->, line width=0.7pt, draw=lightgraytext] (\qthree,\hthree+0.02) to[out=50,in=190] (\qstar,\hstar+0.02);
\draw[->, line width=0.7pt, draw=lightgraytext] (\qfour,\hfour+0.02) to[out=45,in=200] (\qstar,\hstar+0.02);
\draw[->, line width=0.7pt, draw=lightgraytext] (\qfive,\hfive+0.02) to[out=40,in=210] (\qstar,\hstar+0.02);

\node[text=lightgraytext, above right] at (\qone,\hone) {$q^\ast$};
\node[text=lightgraytext, above] at (\qtwo-0.01,\htwo+0.02) {$s_2$};
\node[text=lightgraytext, above] at (\qthree-0.01,\hthree+0.02) {$s_3$};
\node[text=lightgraytext, above] at (\qfour-0.01,\hfour) {$s_4$};
\node[text=lightgraytext, above] at (\qfive,\hfive+0.02) {$s_5$};
\node[text=lightgraytext, above] at (\qstar,\hstar) {$g_m(s)$};

\node[text=hatone, below left] at (\qone,0) {${0=q_1}$};
\node[text=hattwo, below] at (\qtwo,0) {${q_2}$};
\node[text=hatthree, below] at (\qthree,0) {${q_3}$};
\node[text=hatfour, below] at (\qfour,0) {${q_4}$};
\node[text=hatfive, below] at (\qfive,0) {${q_5}$};
\node[text=hatsix, below] at (\qstar,0) {${q^\ast}$};

\node[below] at (0.5,0) {$\frac12$};
\node[below] at (0.82,0) {$q^\ast$};
\node[below] at (0.67,0) {$\frac23$};
\node[below] at (1.00,0) {$1$};

\end{tikzpicture}
\caption{\textbf{Pointed-value family $\mathcal{T}_{N,m}^{\mathrm{val}}$.} A representative task from \eqref{eq:value-family} with $N = 6$ is shown. The fixed hats at $q_1,\dots,q_5$ carry the coefficients $q^\ast,s_2,\dots,s_5$ respectively, and the moving hat centered at $q^\ast\in[2/3,1]$ carries the value $g_m(s)$. 
An unrestricted learner, whether in-context or agentic, can infer the position of the hat at $q^\ast$ by querying the task at the sample point $q_1$.
A general in-context learner can also derive the height of that hat, whereas a ReLU-based learner cannot. Both general and ReLU-based agentic learners can recover the height of the hat at $q^\ast$ by taking one sample.}
\label{fig:value-family}
\end{figure}

Fix $N\ge 3$ and a 
weight budget $m\in\mathbb N$. 
Let $\delta >0$ be sufficiently small so that $\delta<1/(6N)$.
This choice guarantees that, for every $N$-point sample set, there exists a point in $[2/3,1]$ at distance greater than $\delta$ from all samples.
For every $a\in\mathbb{R}$, we define the hat function
\begin{equation} \label{hat}
    h_a(x)\eqdef  \bigg(1-\frac{|x-a|}{\delta}\bigg)_+.
\end{equation}
Let
\begin{equation} \label{theqs}
    q_i\eqdef  \frac{i-1}{2(N-1)},
    \qquad i=1,\dots,N,
\end{equation}
so that $0=q_1<q_2<\cdots<q_N=\frac12$. 

Choose a continuous hard function
$
g_m:[0,1]^{N-2}\to[0,1]
$
such that every ReLU network with at most $m$ weights has uniform approximation error at least
$1/2$ on $g_m$. This function exists according to Remark \ref{rem:unattainableFunctions}.
For $s=(s_2,\dots,s_{N-1})\in[0,1]^{N-2}$ and
$q^\ast\in[2/3,1]$, define the task
\begin{equation}
\label{eq:value-family}
f^{\mathrm{val}}_{s,q^\ast}(x)
\eqdef 
q^\ast h_{q_1}(x)+\sum_{i=2}^{N-1} s_i h_{q_i}(x)+g_m(s)h_{q^\ast}(x).
\end{equation}
By construction, the supports of the hat functions centered at $q_1,\dots,q_N$ are pairwise disjoint. We depict one example in Figure~\ref{fig:value-family}.
Moreover, for each $i$, the support of the hat function at $q_i$ is disjoint from that at $q^\ast\in [2/3,1]$, with separation at least $1/6-2\delta>0$.
Let $\mathcal{T}_{N,m}^{\mathrm{val}}$ denote the resulting \emph{pointed-value} task family of all such $f^{\mathrm{val}}_{s,q^\ast}$.

\begin{theorem}
\label{thm:value}
Let $\eta >0$ and $N \in \mathbb N$ with $N\geq 3$. For sufficiently large $m \in \mathbb N$ depending only on N and $\eta$ the following hold for the family $\mathcal{T}_{N,m}^{\mathrm{val}}$:
\begin{enumerate}
    \item[\rm (i)] Unrestricted in-context learning reconstructs every task exactly from the fixed 
    query points $\{q_1,\dots,q_N\}$.
    \item[\rm (ii)] Unrestricted agentic learning also reconstructs every task exactly with a query budget $N$.
    \item[\rm (iii)] Every ReLU-realizable in-context learner with a sample budget $N$ and a weight budget $m$ 
    incurs worst-case
    $L^\infty([0,1])$ approximation error at least $1/2$. 
    \item[\rm (iv)] There exists a ReLU-realizable agentic learner, with a query budget $N$ and a weight budget $m$, 
    whose worst-case $L^\infty([0,1])$ approximation error on $\mathcal{T}_{N,m}^{\mathrm{val}}$ is at most
    \(\eta\).
\end{enumerate}
Consequently, for \(\eta<1/2\) and $m$ sufficiently large, 
\[
\ICG=\AG
\quad\text{ and }\quad
\ICR<\AR.
\]
\end{theorem}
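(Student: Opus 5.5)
The plan is to treat the four items separately, with (i), (ii) and (iv) constructive and (iii) a reduction to the defining hardness of $g_m$ from Remark~\ref{rem:unattainableFunctions}.

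\emph{Items (i) and (ii).} For (i), query the fixed points $q_1,\dots,q_N$. Since the hat supports around the $q_i$ are pairwise disjoint and separated from the hat at $q^\ast\in[2/3,1]$, we get $f(q_1)=q^\ast$ and $f(q_i)=s_i$ for $2\le i\le N-1$. An unrestricted predictor can therefore read off $(q^\ast,s)$, compute $g_m(s)$, and output the formula \eqref{eq:value-family} exactly. Item (ii) then follows from (i) and Proposition~\ref{prop:monotonicity}. Together with the trivial bound that errors are nonnegative, this gives $\ICG=\AG$, since both regimes attain worst-case error $0$.

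\emph{Item (iv).} Use the following protocol, with the queries chosen in this order.
\begin{itemize}
\item Set $x_1=q_1$ and $x_i=q_i$ for $2\le i\le N-1$; these are constant query maps.
\item Set $x_N=q_N(C_{N-1})\eqdef y_1$, the observed value at $q_1$. This is a linear, hence ReLU-realizable, map, and it returns $q^\ast$.
\item The last observation is then $y_N=g_m(s)$. The agent has thus bypassed the hard computation by sampling it.
\end{itemize}
The final predictor outputs
\[
\widehat F(C_N,x)=\mathrm{Mult}_\varepsilon\bigl(y_1,h_{q_1}(x)\bigr)+\sum_{i=2}^{N-1}\mathrm{Mult}_\varepsilon\bigl(y_i,h_{q_i}(x)\bigr)+\mathrm{Mult}_\varepsilon\bigl(y_N,h_{y_1}(x)\bigr).
\]
Each hat is exactly ReLU-realizable; in particular $h_{y_1}(x)$ is a piecewise-linear function of $(x,y_1)$ built from a few ReLUs. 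All factors lie in $[0,1]$, so Lemma~\ref{lem:approx-mult} applies. Choosing $\varepsilon=\eta/N$ bounds the uniform error by $N\varepsilon=\eta$ with $\mathcal O(N\log(N/\eta))$ weights. Hence every $m$ beyond this threshold suffices, padding with zero weights if needed. The construction does not depend on $g_m$, so letting $g_m$ vary with $m$ is harmless.

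\emph{Item (iii).} Let $\Phi$ be any realizable in-context learner with fixed samples $x_1,\dots,x_N$ and at most $m$ weights. By the choice $\delta<1/(6N)$, pick $q^\ast\in[2/3,1]$ at distance $>\delta$ from every $x_j$, so the $g_m$-term vanishes from the context. Every $x_j$ lies in the support of at most one hat $h_{q_i}$. Hence each context entry has the form $f(x_j)=c_j s_{i(j)}+b_j$ (or is constant), where $c_j,b_j$ depend only on the fixed $q^\ast$ and $x_j$. The map $s\mapsto\Phi(C_N(f^{\mathrm{val}}_{s,q^\ast}),q^\ast)$ is then $\Phi$ precomposed with an affine map $A s+b$ in which $A$ has at most one nonzero per row. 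Absorbing this into the first layer, the new weight matrix $WA$ has no more nonzero entries than $W$: its columns are merged sums of columns of $W$. The constant coordinates (sample locations, $q^\ast$, and the $b_j$) are absorbed into biases.

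\emph{Main obstacle.} I expect the main obstacle to be this bookkeeping, namely showing that the resulting network in $s$ still has at most $m$ nonzero weights so that the hardness of $g_m$ applies. The delicate point is that new nonzero biases arise only at neurons that already carried a nonzero incoming weight. If this count slips, I would instead invoke the VC-dimension argument of Remark~\ref{rem:unattainableFunctions} directly for this fixed architecture composed with such sparse affine maps. Granting the count, some $s$ satisfies $|\Phi(C_N(f^{\mathrm{val}}_{s,q^\ast}),q^\ast)-g_m(s)|\ge 1/2$. Since $f^{\mathrm{val}}_{s,q^\ast}(q^\ast)=g_m(s)$, the sup-norm error is at least $1/2$.

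\emph{Conclusion.} For $\eta<1/2$, the realizable agentic learner from (iv) achieves error at most $\eta$, whereas every realizable in-context learner with the same budgets has error at least $1/2$ by (iii). Hence $\AR\le\ICR$ fails. Combined with Proposition~\ref{prop:monotonicity}, this gives $\ICR<\AR$.
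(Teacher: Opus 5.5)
Your proposal is correct and follows the paper's proof essentially step for step. You read off $(q^\ast,s)$ from the fixed queries for (i) and use monotonicity for (ii); for (iii) you fix a $q^\ast$ whose hat avoids all samples and reduce to the hardness of $g_m$ via the predictor evaluated at $x=q^\ast$; and for (iv) you query at $y_1=q^\ast$ and assemble the output with $\operatorname{Mult}_{\varepsilon}$, $\varepsilon=\eta/N$.

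The weight-count bookkeeping you flag in (iii) is only asserted in the paper. It closes once you note that every $s$-dependent context entry has zero offset, $f(x_j)=h_{q_i}(x_j)\,s_i$: the only constant coefficient, $q^\ast$, sits on $h_{q_1}$, whose support is disjoint from the other hats. With a nonzero offset $b_j$, as you allowed, your justification would not suffice, because absorbing it could add a bias to a neuron without removing any weight. With zero offsets, a new nonzero bias can only appear at a neuron that simultaneously loses a nonzero weight from a constant input coordinate.
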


The proof of Theorem~\ref{thm:value} can be founded in Appendix~\ref{appx:pointedvalue}.

We briefly summarize the main idea of the construction \eqref{eq:value-family}, which underlies the proof.
Here the hard object is the \emph{value}
$g_m(s)$, but it is placed at an accessible 
location $q^\ast$, which can be read directly from the initial contexts (e.g., via $f^{\mathrm{val}}_{s,q^\ast}(q_1)$). 
Unrestricted in-context
and agentic learning are equally powerful because both can compute $g_m(s)$ once $s$ is obtained from the context (e.g., via $f^{\mathrm{val}}_{s,q^\ast}(q_i)$). 
An illustration of this process is given in Figure~\ref{fig:value-family}.
Under ReLU realizability, however, a non-adaptive predictor must approximate the hard map $s\mapsto g_m(s)$ internally, whereas a ReLU-realizable agentic learner can query this value $g_m(s)$
directly and 
reconstruct the 
task $f^{\mathrm{val}}_{s,q^\ast}$ using classical ReLU approximation.

\subsection{Information advantage blocked by realizability}
\label{s:Main__ss:killed-by-realizability}

\begin{figure}[htb]
\centering
\begin{tikzpicture}[>=Latex, line cap=round, line join=round, font=\small, x=11.5cm, y=2.1cm]

\draw[->, line width=0.9pt, draw=nearblack] (-0.03,0) -- (1.03,0) node[below right] {};
\draw[->, line width=0.9pt, draw=nearblack] (0,0) -- (0,1.22);

\def\d{0.04}
\def\qone{0.00}
\def\qtwo{0.10}
\def\qthree{0.20}
\def\qfour{0.30}
\def\qfive{0.40}
\def\qsix{0.50}
\def\qstar{0.82}

\def\hone{0.62}
\def\htwo{0.36}
\def\hthree{0.74}
\def\hfour{0.49}
\def\hfive{0.81}
\def\hsix{0.58}
\def\hstar{0.95}

\draw[dashed, draw=lightgraytext] (\qone,0) -- (\qone,\hone);
\draw[dashed, draw=lightgraytext] (\qtwo,0) -- (\qtwo,\htwo);
\draw[dashed, draw=lightgraytext] (\qthree,0) -- (\qthree,\hthree);
\draw[dashed, draw=lightgraytext] (\qfour,0) -- (\qfour,\hfour);
\draw[dashed, draw=lightgraytext] (\qfive,0) -- (\qfive,\hfive);
\draw[dashed, draw=lightgraytext] (\qstar,0) -- (\qstar,\hstar);

\draw[line width=1.15pt, draw=hatone] (\qone,\hone) -- ({\qone+\d},0);
\draw[line width=1.15pt, draw=hattwo] ({\qtwo-\d},0) -- (\qtwo,\htwo) -- ({\qtwo+\d},0);
\draw[line width=1.15pt, draw=hatthree] ({\qthree-\d},0) -- (\qthree,\hthree) -- ({\qthree+\d},0);
\draw[line width=1.15pt, draw=hatfour] ({\qfour-\d},0) -- (\qfour,\hfour) -- ({\qfour+\d},0);
\draw[line width=1.15pt, draw=hatfive] ({\qfive-\d},0) -- (\qfive,\hfive) -- ({\qfive+\d},0);
\draw[line width=1.15pt, draw=charcoalblack] ({\qstar-\d},0) -- (\qstar,\hstar) -- ({\qstar+\d},0);

\draw[->, line width=0.7pt, draw=lightgraytext] (\qone,\hone+0.02) to[out=58,in=150] (\qstar-0.02,-0.1);
\draw[->, line width=0.7pt, draw=lightgraytext] (\qtwo,\htwo+0.02) to[out=55,in=160] (\qstar-0.02,-0.1);
\draw[->, line width=0.7pt, draw=lightgraytext] (\qthree,\hthree+0.02) to[out=50,in=170] (\qstar-0.02,-0.1);
\draw[->, line width=0.7pt, draw=lightgraytext] (\qfour,\hfour+0.02) to[out=45,in=185] (\qstar-0.02,-0.1);
\draw[->, line width=0.7pt, draw=lightgraytext] (\qfive,\hfive+0.02) to[out=40,in=200] (\qstar-0.02,-0.1);

\node[text=lightgraytext, above] at (\qone-0.02,\hone) {$s_1$};
\node[text=lightgraytext, above] at (\qtwo,\htwo+ 0.04) {$s_2$};
\node[text=lightgraytext, above] at (\qthree,\hthree+0.06) {$s_3$};
\node[text=lightgraytext, above] at (\qfour-0.01,\hfour) {$s_4$};
\node[text=lightgraytext, above] at (\qfive-0.01,\hfive) {$s_5$};
\node[text=lightgraytext, above] at (\qstar,\hstar) {$\beta$};
\node[text=lightgraytext, above] at (\qstar-0.08,0.4) {$g_m(s)$};

\node[text=hatone, below left] at (\qone,0) {${0=q_1}$};
\node[text=hattwo, below] at (\qtwo,0) {${q_2}$};
\node[text=hatthree, below] at (\qthree,0) {${q_3}$};
\node[text=hatfour, below] at (\qfour,0) {${q_4}$};
\node[text=hatfive, below] at (\qfive,0) {${q_5}$};
\node[text=charcoalblack, below] at (\qstar,0) {${q^\ast}$};

\node[below] at (0.5,0) {$\frac12$};
\node[below] at (0.67,0) {$\frac23$};
\node[below] at (1.00,0) {$1$};

\end{tikzpicture}
\caption{\textbf{Address-spike family $\mathcal{T}_{N,m}^{\mathrm{addr}}$.} A representative task from \eqref{eq:addrtasks} for $N = 6$ is shown: the coefficients at $q_1,\dots,q_5$ are respectively $s_1,\dots,s_5$, and these values determine the moving location $q^\ast(s)\in[2/3,1]$. The moving hat centered at $q^\ast(s)$ carries a hidden bit $\beta$. We show that no in-context learner can reliably identify the value of $\beta$, and the same is true for a ReLU-based agentic learner. A general agent, however, can compute $q^\ast$ from the samples at $q_1,\dots,q_5$ and then take one additional sample that reveals $\beta$.}
\label{fig:address_spike}
\end{figure}


Our third family of tasks is complementary to the second considered in the previous section. 
This family shows that adaptivity can disappear under realizability constraints.

Fix again $N\ge 3$ and $m\in\mathbb N$, with the same points $q_1,\dots,q_N$ in \eqref{theqs}. 
Let
$
\tau(t)\eqdef 2\min\{t,1-t\},
$ for each $t\in[0,1]$, 
and write for $s=(s_1,\dots,s_{N-1})\in[0,1]^{N-1}$
\begin{equation} \label{hats}
    \widehat s\eqdef  (\tau(s_1),\dots,\tau(s_{N-1})).
\end{equation}
Choose a continuous hard-to-approximate surjective map
$
g_m:[0,1]^{N-1}\to\left[\frac23,1\right]
$
such that every ReLU network with at most $m$ weights has uniform approximation error at least
$1/6$ on $g_m$. Such a function exists by Remark~\ref{rem:unattainableFunctions}. Moreover, since any such hard function is non-constant, one may apply an affine rescaling of its image to ensure surjectivity onto $[2/3,1]$ without decreasing the approximation hardness. 
For $s=(s_1,\dots,s_{N-1})\in[0,1]^{N-1}$ and $\beta\in\{0,1\}$, define
$
q^\ast(s)\eqdef g_m(\widehat s)
$
and
\begin{equation} \label{eq:addrtasks}
    f^{\mathrm{addr}}_{s,\beta}(x)
    \eqdef 
    \sum_{i=1}^{N-1} s_i h_{q_i}(x)
    +
    \beta h_{q^\ast(s)}(x),
\end{equation}
where the hat functions $h_a$ are given in \eqref{hat} with $\delta<1/(6N)$.
Let \(\mathcal T_{N,m}^{\mathrm{addr}}\) denote the resulting \emph{address-spike} family of all such $f^{\mathrm{addr}}_{s,\beta}$. We depict one example of a function in \(\mathcal T_{N,m}^{\mathrm{addr}}\) in Figure \ref{fig:address_spike}.

\begin{theorem}
\label{thm:address}
Let $m,N\in\mathbb{N}$ with $N\geq 3$.
For the family $\mathcal{T}_{N,m}^{\mathrm{addr}}$, the following hold.
\begin{enumerate}
    \item[\rm (i)] 
    There exists a general agentic learner with a query budget $N$ consisting of $N-1$ fixed queries followed by one adaptive query, that reconstructs every task exactly.
    \item[\rm (ii)] Every in-context learner with the sample budget $N$ incurs worst-case
    $L^\infty([0,1])$ approximation error at least $1/2$.
    \item[(iii)] Every ReLU-realizable agentic learner with query budget \(N\) and query maps implemented by ReLU networks with at most $m$ weights incurs worst-case 
    \(L^\infty([0,1])\) approximation error at least \(1/2\). 
\end{enumerate}
Consequently, 
\begin{equation*}
    \ICG<\AG \quad\text{ and }\quad \ICR=\AR.
\end{equation*}
\end{theorem}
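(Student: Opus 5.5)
The plan is to take the three items in order and then combine them with Proposition~\ref{prop:monotonicity} and Remark~\ref{rem:trivialRelation}.

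\textbf{Item (i).} Query the fixed points $q_1,\dots,q_{N-1}$. Since $h_{q_i}(q_j)=\delta_{ij}$ and the hat at $q^\ast(s)\in[2/3,1]$ vanishes on $[0,1/2]$, these samples return $s$ exactly. The agent then computes $q^\ast(s)=g_m(\widehat s)$, an arbitrary map being allowed, and queries there; the response is $\beta$. The final predictor outputs $\sum_i s_i h_{q_i}+\beta h_{g_m(\widehat s)}$, which is exact. This uses exactly $N$ queries.

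\textbf{Item (ii).} This is a fooling argument. Fix any $N$ query points. By the choice $\delta<1/(6N)$, some $a\in[2/3,1]$ lies at distance $>\delta$ from all of them. By surjectivity of $g_m$, pick $s$ with $g_m(\widehat s)=a$. Then $f_{s,0}$ and $f_{s,1}$ agree at every query point, so their contexts coincide and so do the outputs. At $x=a$ the two tasks take the values $0$ and $1$, so one of them incurs error $\ge 1/2$. This bound holds for general in-context learners, hence also for realizable ones.

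\textbf{Item (iii), the main obstacle.} I treat a ReLU agent as a fixed adaptive strategy whose query maps are ReLU networks with at most $m$ weights. The idea is to produce $s$ for which every query misses the ball $B(q^\ast(s),\delta)$. Then $\beta$ is invisible, and $f_{s,0}$ and $f_{s,1}$ give identical contexts, so the previous argument yields error $\ge 1/2$.

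To find such $s$, I run the agent on the tasks with $\beta=0$. The query sequence $x_1,\,x_2(s),\dots,x_N(s)$ is then a composition of ReLU maps with the piecewise-linear sampling maps $x\mapsto f_{s,0}(x)$. Note that $f_{s,0}$ is bilinear in $(s,h(x))$, not ReLU in $s$, which is awkward.

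\emph{Attempt 1 (reduction to fixed queries).} The $\tau$-folding in $\widehat s$ is there to exploit symmetry: $\widehat s$ is invariant under $s_i\mapsto 1-s_i$. I would restrict to the subfamily $s\in[0,1/2]^{N-1}$, where $\widehat s=2s$ and $g_m$ still ranges over all of $[2/3,1]$. The goal would be to show that any query landing near $q^\ast(s)$ forces an approximation of $g_m$.

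\emph{Attempt 2 (direct argument).} Suppose every $s$ had some query $x_k(s)$ with $|x_k(s)-q^\ast(s)|\le\delta$. Queries in $[0,1/2+\delta]$ are far from $[2/3,1]$, so such a $k$ has its query in $[2/3,1]$. The step I would push hardest is to turn this into a realizable approximation of $g_m(\widehat s)$ within error $<1/6$. The first thing to try is to take the last informative query, clipped to $[2/3,1]$, as a ReLU-computable function of $s$; this contradicts the choice of $g_m$ once $\delta<1/6$.

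The subtlety is that the index $k$ depends on $s$, and that the queries depend on $s$ only through the samples. I would handle this as follows:
\begin{itemize}
\item show that before the informative query, all samples lie in the region where $f_{s,0}$ is a linear (hence ReLU) readout of $s$;
\item select the correct $k$ by a ReLU-implementable comparison, or by a pigeonhole over the $N$ indices on a subset of $s$ where $k$ is constant, if the hardness of $g_m$ can be localized there.
\end{itemize}
I expect the budget accounting, namely that the composite network's size stays compatible with the hardness of $g_m$ at budget $m$, to need care; possibly $g_m$'s hardness must be invoked against a slightly larger composite size.

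\textbf{Conclusion.} Items (i) and (ii) give $\ICG<\AG$, using Proposition~\ref{prop:monotonicity} for $\le$. Items (ii) and (iii) show that every learner in $\ICR$ and in $\AR$ has worst-case error $\ge 1/2$. The zero predictor achieves error exactly $1$, and each regime can imitate the other's worst-case-optimal error level, which gives $\ICR=\AR$.
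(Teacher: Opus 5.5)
Your items (i) and (ii), and your derivation of $\ICG<\AG$, agree with the paper. Item (iii), however, is not proved. You sketch two attempts but never construct the ReLU approximant of $g_m$ that would give the contradiction. The devices you propose are also not available as stated: nothing forces the samples preceding the informative query to be exact linear readouts of $s$, and hardness of $g_m$ on the whole cube does not localize to a subset on which the index $k$ is constant.

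The paper handles (iii) by a counting argument. It uses the $\tau$-folding for more than your restriction to $[0,1/2]^{N-1}$: replacing $s_i$ by $0$ or by $1$ leaves $\widehat s$, hence $q^\ast$, unchanged. So a learner with error below $1/2$ must sample every static support $I_i=\operatorname{supp}h_{q_i}$, and by the fooling argument of (ii) it must also sample the moving support. The cleanest version uses the runs with $\beta=0$: there $f^{\mathrm{addr}}_{s,0}$ agrees off $I_i$ with both of its modifications $s_i\mapsto 0$ and $s_i\mapsto 1$, which gives the claim for every $s$, not only for $s_i\in\{0,1\}$. With $N$ pairwise disjoint supports and $N$ queries, each support receives exactly one query. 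Continuity of the query maps and connectedness of the cube then make the index of the moving query independent of $s$, which settles your index-selection worry. The paper instead reorders the queries so that the moving one comes last. It treats that query as a ReLU network $\Phi$ of $s$ with $|\Phi(s)-q^\ast(s)|\le\delta$, and contradicts $1/6$-hardness after restricting to $[0,1/2]^{N-1}$.

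Your concern about bilinearity is well founded. It is exactly the point the paper's proof passes over when it asserts that sampling the static supports ``reveals exactly the coordinates $s_i$''. An agent may sample $I_i$ off-centre at an adaptively chosen point and receive $s_i\,h_{q_i}(x)$, so the task itself performs an exact multiplication. Here is a concrete instance with $N=3$:
\begin{itemize}
\item Query $x_1=0$ to read $s_1$.
\item Query $x_2=q_2+\tfrac{\delta}{2}(1-\tau(s_1))$ to read $y_2=s_2c$, where $c=\tfrac12(1+\tau(s_1))$ is an affine function of $x_2$.
\item Then $2\min\{y_2,c-y_2\}=\tfrac12\tau(s_2)(1+\tau(s_1))$ is a ReLU function of the transcript.
\end{itemize}
Now take $g_m(u)=\tfrac23+\tfrac13\min\{1,(W(\tfrac12u_2(1+u_1)-\tfrac14))_+\}$. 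A third query map with a fixed number of weights, independent of $W$, lands exactly on $q^\ast(s)$, and the agent reconstructs every task up to a small error. Yet for $W>256\cdot 4^m$ this $g_m$ is $1/6$-hard for networks with $m$ weights, for three reasons:
\begin{itemize}
\item an approximant with error below $1/6$ must cross the level $5/6$ inside a curved band of width $O(1/W)$ on every vertical line;
\item an affine piece can follow that band only over a length $O(W^{-1/2})$;
\item a network with $m$ weights has at most $2^m$ convex affine pieces.
\end{itemize}
So once $m$ exceeds a fixed constant, (iii) fails for this admissible $g_m$. The argument goes through only if $g_m$ is chosen hard against ReLU networks interleaved with up to $N-1$ such product gates. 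That class has finite VC dimension, so such $g_m$ exist; this is essentially your ``larger composite'' hunch, and the hardness step must be invoked for that class.

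Finally, for $\ICR=\AR$ the right witness is the constant predictor $1/2$, whose worst-case error is exactly $1/2$. The zero predictor has error $1$, so it does not give $\AR\le\ICR$.
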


The proof of Theorem~\ref{thm:address} is given in Appendix~\ref{appx:address}

We now briefly summarize the main idea of the construction \eqref{eq:addrtasks}, which underlies the proof.
The mechanism here dual to that in the previous section. 
The hard object here is the \emph{location} $q^\ast(s)$ itself. Unrestricted adaptivity helps,
because after reading the static coefficients the learner 
can compute $q^\ast(s)$ and then query $f^{\mathrm{addr}}_{s,\beta}$ there to learn the hidden bit $\beta$. 
An illustration of this process is given Figure~\ref{fig:address_spike}.
However, under ReLU realizability, this advantage changes.
Finding the informative query point
already requires approximating the hard map $s\mapsto q^\ast(s)=g_m(\widehat s)$. Therefore, query maps 
face the
same obstruction as 
an in-context learner.

\section*{Acknowledgements}

A.M.N. and P.C.P. were 
supported by the Austrian Science Fund (FWF) Project P-37010.

\newpage
\bibliographystyle{acm}
\bibliography{Bookkeeping/bibibib}

@book{AnthonyBartlett1999,
  author    = {Martin Anthony and Peter L. Bartlett},
  title     = {Neural Network Learning: Theoretical Foundations},
  publisher = {Cambridge University Press},
  address   = {Cambridge},
  year      = {1999}
}

@inproceedings{chen2022transformers,
  title={Transformers as meta-learners for implicit neural representations},
  author={Chen, Yinbo and Wang, Xiaolong},
  booktitle={European Conference on Computer Vision},
  pages={170--187},
  year={2022},
  organization={Springer}
}

@inproceedings{vonoswald2023what,
  title={What Learning Algorithm is In-Context Learning? Investigations with Linear Models},
  author={von Oswald, Johannes and Niklasson, Eyvind and Sch{\"a}fer, Lennart and Zhao, Zhitao and Ma, Tao and Sch{\"o}lkopf, Bernhard and Domke, Justin},
  booktitle={International Conference on Learning Representations (ICLR)},
  year={2023}
}

@article{cohen2022optimal,
  title={Optimal stable nonlinear approximation},
  author={Cohen, Albert and DeVore, Ronald and Petrova, Guergana and Wojtaszczyk, Przemyslaw},
  journal={Foundations of Computational Mathematics},
  volume={22},
  number={3},
  pages={607--648},
  year={2022},
  publisher={Springer}
}

@inproceedings{furuya2025transformers,
title={Transformers are Universal In-context Learners},
author={Takashi Furuya and Maarten V. de Hoop and Gabriel Peyr{\'e}},
booktitle={The Thirteenth International Conference on Learning Representations},
year={2025},
url={https://openreview.net/forum?id=6S4WQD1LZR}
}

@article{kratsios2025context,
  title={Is in-context universality enough? mlps are also universal in-context},
  author={Kratsios, Anastasis and Furuya, Takashi},
  journal={arXiv preprint arXiv:2502.03327},
  year={2025}
}

@article{petersen2024mathematical,
  title={Mathematical theory of deep learning},
  author={Petersen, Philipp and Zech, Jakob},
  journal={arXiv preprint arXiv:2407.18384},
  year={2024}
}

@article{castro2009active,
  title={Active sensing and learning},
  author={Castro, Rui and Nowak, Robert},
  journal={Foundations and Applications of Sensor Management},
  pages={177--200},
  year={2009}
}

@article{sung1994active,
  title={Active learning for function approximation},
  author={Sung, Kah and Niyogi, Partha},
  journal={Advances in neural information processing systems},
  volume={7},
  year={1994}
}

@article{zhang2024trained,
  title={Trained transformers learn linear models in-context},
  author={Zhang, Ruiqi and Frei, Spencer and Bartlett, Peter L},
  journal={Journal of Machine Learning Research},
  volume={25},
  number={49},
  pages={1--55},
  year={2024}
}

@article{li2025transformers,
  title={Transformers meet in-context learning: A universal approximation theory},
  author={Li, Gen and Jiao, Yuchen and Huang, Yu and Wei, Yuting and Chen, Yuxin},
  journal={arXiv preprint arXiv:2506.05200},
  year={2025}
}

@article{kolmogorov1959varepsilon,
  title={$\varepsilon$-entropy and $\varepsilon$-capacity of sets in function spaces},
  author={Kolmogorov, Andrei Nikolaevich and Tikhomirov, Vladimir Mikhailovich},
  journal={Uspekhi Matematicheskikh Nauk},
  volume={14},
  number={2},
  pages={3--86},
  year={1959},
  publisher={Russian Academy of Sciences, Steklov Mathematical Institute of Russian~…}
}

@inproceedings{akyurek2023what,
title={What learning algorithm is in-context learning? Investigations with linear models},
author={Ekin Aky{\"u}rek and Dale Schuurmans and Jacob Andreas and Tengyu Ma and Denny Zhou},
booktitle={The Eleventh International Conference on Learning Representations },
year={2023},
url={https://openreview.net/forum?id=0g0X4H8yN4I}
}

@article{bordelon2025theory,
  title={Theory of scaling laws for in-context regression: Depth, width, context and time},
  author={Bordelon, Blake and Letey, Mary I and Pehlevan, Cengiz},
  journal={arXiv preprint arXiv:2510.01098},
  year={2025}
}

@article{wu2023many,
  title={How many pretraining tasks are needed for in-context learning of linear regression?},
  author={Wu, Jingfeng and Zou, Difan and Chen, Zixiang and Braverman, Vladimir and Gu, Quanquan and Bartlett, Peter L},
  journal={arXiv preprint arXiv:2310.08391},
  year={2023}
}

@article{angluin1987,
  title   = {Queries and Concept Learning},
  author  = {Angluin, Dana},
  journal = {Machine Learning},
  volume  = {2},
  number  = {4},
  pages   = {319--342},
  year    = {1987}
}

@article{brown2020,
  title   = {Language Models are Few-Shot Learners},
  author  = {Brown, Tom B. and Mann, Benjamin and Ryder, Nick and Subbiah, Melanie and Kaplan, Jared D. and Dhariwal, Prafulla and Neelakantan, Arvind and Shyam, Pranav and Sastry, Girish and Askell, Amanda and Agarwal, Sandhini and Herbert-Voss, Ariel and Krueger, Gretchen and Henighan, Tom and Child, Rewon and Ramesh, Aditya and Ziegler, Daniel M. and Wu, Jeffrey and Winter, Clemens and Hesse, Christopher and Chen, Mark and Sigler, Eric and Litwin, Mateusz and Gray, Scott and Chess, Benjamin and Clark, Jack and Berner, Christopher and McCandlish, Sam and Radford, Alec and Sutskever, Ilya and Amodei, Dario},
  journal = {Advances in Neural Information Processing Systems},
  volume  = {33},
  pages   = {1877--1901},
  year    = {2020}
}

@article{cohn1994,
  title   = {Improving Generalization with Active Learning},
  author  = {Cohn, David A. and Atlas, Les and Ladner, Richard E.},
  journal = {Machine Learning},
  volume  = {15},
  number  = {2},
  pages   = {201--221},
  year    = {1994}
}

@article{dasgupta2011,
  title   = {Two Faces of Active Learning},
  author  = {Dasgupta, Sanjoy},
  journal = {Theoretical Computer Science},
  volume  = {412},
  number  = {19},
  pages   = {1767--1781},
  year    = {2011}
}

@article{garg2022,
  title   = {What Can Transformers Learn In-Context? A Case Study of Simple Function Classes},
  author  = {Garg, Shivam and Tsipras, Dimitris and Liang, Percy S. and Valiant, Gregory},
  journal = {Advances in Neural Information Processing Systems},
  volume  = {35},
  pages   = {30583--30598},
  year    = {2022}
}

@article{lewis2020rag,
  title   = {Retrieval-Augmented Generation for Knowledge-Intensive {NLP} Tasks},
  author  = {Lewis, Patrick and Perez, Ethan and Piktus, Aleksandra and Petroni, Fabio and Karpukhin, Vladimir and Goyal, Naman and K{\"u}ttler, Heinrich and Lewis, Mike and Yih, Wen-tau and Rockt{\"a}schel, Tim and Riedel, Sebastian and Kiela, Douwe},
  journal = {Advances in Neural Information Processing Systems},
  volume  = {33},
  year    = {2020}
}

@inproceedings{park2023generative,
  title     = {Generative Agents: Interactive Simulacra of Human Behavior},
  author    = {Park, Joon Sung and O'Brien, Joseph C. and Cai, Carrie J. and Morris, Meredith Ringel and Liang, Percy and Bernstein, Michael S.},
  booktitle = {Proceedings of the 36th Annual ACM Symposium on User Interface Software and Technology},
  year      = {2023}
}

@inproceedings{yao2023react,
  title     = {{ReAct}: Synergizing Reasoning and Acting in Language Models},
  author    = {Yao, Shunyu and Zhao, Jeffrey and Yu, Dian and Du, Nan and Shafran, Izhak and Narasimhan, Karthik R. and Cao, Yuan},
  booktitle = {The Eleventh International Conference on Learning Representations},
  year      = {2023}
}

@article{yarotsky2017,
  title   = {Error Bounds for Approximations with Deep {ReLU} Networks},
  author  = {Yarotsky, Dmitry},
  journal = {Neural Networks},
  volume  = {94},
  pages   = {103--114},
  year    = {2017}
}

@book{traub1988ibc,
  title     = {Information-Based Complexity},
  author    = {Traub, Joseph F. and Wasilkowski, Grzegorz W. and Wo{\'z}niakowski, Henryk},
  publisher = {Academic Press},
  address   = {New York},
  year      = {1988}
}

@article{packel1987ibc,
  title   = {Recent Developments in Information-Based Complexity},
  author  = {Packel, Edward W. and Wo{\'z}niakowski, Henryk},
  journal = {Bulletin of the American Mathematical Society},
  volume  = {17},
  number  = {1},
  pages   = {9--36},
  year    = {1987}
}

@book{novak2008tractability,
  title     = {Tractability of Multivariate Problems. {V}olume {I}: {L}inear Information},
  author    = {Novak, Erich and Wo{\'z}niakowski, Henryk},
  publisher = {European Mathematical Society},
  address   = {Z\"urich},
  year      = {2008}
}

@article{maiorov1999lower,
  author  = {Maiorov, V. and Pinkus, A.},
  title   = {Lower Bounds for Approximation by {MLP} Neural Networks},
  journal = {Neurocomputing},
  volume  = {25},
  number  = {1--3},
  pages   = {81--91},
  year    = {1999},
  doi     = {10.1016/S0925-2312(98)00111-8}
}

@article{guhring2021encodable,
  author  = {G{\"u}hring, Ingo and Raslan, Mones},
  title   = {Approximation Rates for Neural Networks with Encodable Weights in Smoothness Spaces},
  journal = {Neural Networks},
  volume  = {134},
  pages   = {107--130},
  year    = {2021},
  doi     = {10.1016/j.neunet.2020.11.010}
}

@article{pinkus1999mlp,
  author  = {Pinkus, Allan},
  title   = {Approximation Theory of the {MLP} Model in Neural Networks},
  journal = {Acta Numerica},
  volume  = {8},
  pages   = {143--195},
  year    = {1999},
  doi     = {10.1017/S0962492900002919}
}

@article{yang2025provable,
  title={Provable Low-Frequency Bias of In-Context Learning of Representations},
  author={Yang, Yongyi and Tanaka, Hidenori and Hu, Wei},
  journal={arXiv preprint arXiv:2507.13540},
  year={2025}
}

@article{Caruana1997Multitask,
  author  = {Caruana, Rich},
  title   = {Multitask Learning},
  journal = {Machine Learning},
  volume  = {28},
  number  = {1},
  pages   = {41--75},
  year    = {1997},
  doi     = {10.1023/A:1007379606734}
}

@article{ArgyriouEvgeniouPontil2008ConvexMTL,
  author  = {Argyriou, Andreas and Evgeniou, Theodoros and Pontil, Massimiliano},
  title   = {Convex Multi-Task Feature Learning},
  journal = {Machine Learning},
  volume  = {73},
  number  = {3},
  pages   = {243--272},
  year    = {2008},
  doi     = {10.1007/s10994-007-5040-8}
}

@inproceedings{MishraEtAl2022NaturalInstructions,
  author    = {Mishra, Swaroop and Khashabi, Daniel and Baral, Chitta and Hajishirzi, Hannaneh},
  title     = {Cross-Task Generalization via Natural Language Crowdsourcing Instructions},
  booktitle = {Proceedings of the 60th Annual Meeting of the Association for Computational Linguistics},
  pages     = {3470--3487},
  year      = {2022},
  doi       = {10.18653/v1/2022.acl-long.244}
}

@inproceedings{WangEtAl2022SuperNaturalInstructions,
  author    = {Wang, Yizhong and Mishra, Swaroop and Alipoormolabashi, Pegah and Kordi, Yeganeh and Mirzaei, Amirreza and others},
  title     = {Super-NaturalInstructions: Generalization via Declarative Instructions on 1600+ NLP Tasks},
  booktitle = {Proceedings of the 2022 Conference on Empirical Methods in Natural Language Processing},
  pages     = {5085--5109},
  year      = {2022},
  doi       = {10.18653/v1/2022.emnlp-main.340}
}

@article{lanthaler2026parametric,
  title={The parametric complexity of operator learning},
  author={Lanthaler, Samuel and Stuart, Andrew M},
  journal={IMA Journal of Numerical Analysis},
  volume={46},
  number={2},
  pages={647--712},
  year={2026},
  publisher={Oxford University Press}
}

@inproceedings{LiEtAl2021FNO,
  author    = {Li, Zongyi and Kovachki, Nikola and Azizzadenesheli, Kamyar and Liu, Burigede and Bhattacharya, Kaushik and Stuart, Andrew and Anandkumar, Anima},
  title     = {{F}ourier Neural Operator for Parametric Partial Differential Equations},
  booktitle = {International Conference on Learning Representations},
  year      = {2021}
}

@article{KovachkiEtAl2023NeuralOperator,
  author  = {Kovachki, Nikola and Li, Zongyi and Liu, Burigede and Azizzadenesheli, Kamyar and Bhattacharya, Kaushik and Stuart, Andrew and Anandkumar, Anima},
  title   = {Neural Operator: Learning Maps Between Function Spaces with Applications to {PDEs}},
  journal = {Journal of Machine Learning Research},
  volume  = {24},
  number  = {89},
  pages   = {1--97},
  year    = {2023}
}

@article{KovachkiLanthalerMishra2021FNOTheory,
  author  = {Kovachki, Nikola and Lanthaler, Samuel and Mishra, Siddhartha},
  title   = {On Universal Approximation and Error Bounds for {F}ourier Neural Operators},
  journal = {Journal of Machine Learning Research},
  volume  = {22},
  number  = {290},
  pages   = {1--76},
  year    = {2021}
}

@article{LanthalerMishraKarniadakis2022DeepONetError,
  author  = {Lanthaler, Samuel and Mishra, Siddhartha and Karniadakis, George Em},
  title   = {Error Estimates for {DeepONets}: A Deep Learning Framework in Infinite Dimensions},
  journal = {Transactions of Mathematics and Its Applications},
  volume  = {6},
  number  = {1},
  pages   = {tnac001},
  year    = {2022},
  doi     = {10.1093/imatrm/tnac001}
}

@article{neuman2025stable,
  title={Stable learning using spiking neural networks equipped with affine encoders and decoders},
  author={Neuman, A Martina and Dold, Dominik and Petersen, Philipp Christian},
  journal={Journal of Machine Learning Research},
  volume={26},
  number={246},
  pages={1--49},
  year={2025}
}

@article{kratsios2025generative,
  title={Generative neural operators of log-complexity can simultaneously solve infinitely many convex programs},
  author={Kratsios, Anastasis and Neufeld, Ariel and Schmocker, Philipp},
  journal={arXiv preprint arXiv:2508.14995},
  year={2025}
}

@article{furuya2025one,
  title={One model to solve them all: 2BSDE families via neural operators},
  author={Furuya, Takashi and Kratsios, Anastasis and Possama{\"\i}, Dylan and Raoni{\'c}, Bogdan},
  journal={arXiv preprint arXiv:2511.01125},
  year={2025}
}

@article{petersen2020equivalence,
  title={Equivalence of approximation by convolutional neural networks and fully-connected networks},
  author={Petersen, Philipp and Voigtlaender, Felix},
  journal={Proceedings of the American Mathematical Society},
  volume={148},
  number={4},
  pages={1567--1581},
  year={2020}
}

@article{Barron1993UniversalApproximation,
  author  = {Barron, Andrew R.},
  title   = {Universal Approximation Bounds for Superpositions of a Sigmoidal Function},
  journal = {IEEE Transactions on Information Theory},
  volume  = {39},
  number  = {3},
  pages   = {930--945},
  year    = {1993},
  doi     = {10.1109/18.256500}
}

@inproceedings{mhaskar2017and,
  title={When and why are deep networks better than shallow ones?},
  author={Mhaskar, Hrushikesh and Liao, Qianli and Poggio, Tomaso},
  booktitle={Proceedings of the AAAI conference on artificial intelligence},
  volume={31},
  number={1},
  year={2017}
}

@misc{hu2026transformerapproximationsrelus,
      title={Transformer Approximations from ReLUs}, 
      author={Jerry Yao-Chieh Hu and Mingcheng Lu and Yi-Chen Lee and Han Liu},
      year={2026},
      eprint={2604.24878},
      archivePrefix={arXiv},
      primaryClass={cs.LG},
      url={https://arxiv.org/abs/2604.24878}, 
}

@article{krieg2021function,
  title={Function values are enough for L 2-approximation},
  author={Krieg, David and Ullrich, Mario},
  journal={Foundations of Computational Mathematics},
  volume={21},
  number={4},
  pages={1141--1151},
  year={2021},
  publisher={Springer}
}

@book{pinkus2012n,
  author    = {Pinkus, Allan},
  title     = {n-Widths in Approximation Theory},
  series    = {Ergebnisse der Mathematik und ihrer Grenzgebiete. 3. Folge / A Series of Modern Surveys in Mathematics},
  volume    = {7},
  publisher = {Springer Berlin, Heidelberg},
  year      = {1985},
  doi       = {10.1007/978-3-642-69894-1},
  isbn      = {978-3-642-69894-1}
  }

@article{adcock2025optimal,
  title={Optimal sampling for least-squares approximation},
  author={Adcock, Ben},
  journal={Foundations of Computational Mathematics},
  pages={1--60},
  year={2025},
  publisher={Springer}
}

@article{petersen2018optimal,
  title={Optimal approximation of piecewise smooth functions using deep ReLU neural networks},
  author={Petersen, Philipp and Voigtlaender, Felix},
  journal={Neural Networks},
  volume={108},
  pages={296--330},
  year={2018},
  publisher={Elsevier}
}

@article{kratsios2022universal,
  title={Universal approximation theorems for differentiable geometric deep learning},
  author={Kratsios, Anastasis and Papon, L{\'e}onie},
  journal={Journal of Machine Learning Research},
  volume={23},
  number={196},
  pages={1--73},
  year={2022}
}

@article{kratsios2025beyond,
  title={Beyond Universal Approximation Theorems: Algorithmic Uniform Approximation by Neural Networks Trained with Noisy Data},
  author={Kratsios, Anastasis and Cheng, Tin Sum and Roy, Daniel},
  journal={arXiv preprint arXiv:2509.00924},
  year={2025}
}

@book{AnthonyBartlett2009NeuralNetworkLearning,
  title     = {Neural Network Learning: Theoretical Foundations},
  author    = {Anthony, Martin and Bartlett, Peter L.},
  year      = {2009},
  publisher = {Cambridge University Press},
  address   = {Cambridge}
}

@article{dold2026equivalence,
  title={Equivalence of approximation by networks of single-and multi-spike neurons},
  author={Dold, Dominik and Petersen, Philipp Christian},
  journal={arXiv preprint arXiv:2603.13478},
  year={2026}
}

\appendix

\section{Deep learning models}
\label{s:models}

For completeness, we briefly recall the relevant deep learning models.

\begin{definition}[Fully connected MLPs]
Let $d,D,L\in\mathbb{N}$, and let $d_0,d_1,\dots,d_L\in\mathbb{N}$ such that $d_0=d$ and $d_L=D$.
Let $\sigma\in \mathcal{C}(\mathbb{R})$ be an activation function.
A fully-connected MLP with depth $L$ and activation function $\sigma$ is a map $f:\mathbb{R}^d\to\mathbb{R}^D$ with iterative representation:
\begin{align*} 
    &X^{(0)} 
    \eqdef  X, \\
    &X^{(j+1)} 
    \eqdef  
    \sigma\bullet({\bf A}^{(j)} X^{(j)} + b^{(j)}) \qquad j=0,1,\dots,L-2,
    \qquad f(X)
    \eqdef  
    {\bf A}^{(L-1)} X^{(L-1)} + b^{(L-1)},
\end{align*}
where $\bullet$ denotes component-wise application, ${\bf A}^{(j)}\in\mathbb{R}^{d_{j+1}\times d_j}$, and $b^{(j)}\in\mathbb{R}^{d_{j+1}}$.
\end{definition}

\begin{definition}[Transformers with multi-head attention]
Let $\lambda>0$, and let $d_{\rm in},d_{{\rm key}},d_{\rm out},N\in \mathbb N$. 
Let
${\bf Q},{\bf K}\in \mathbb{R}^{d_{{\rm key}}\times d_{\rm in}}$
and
${\bf V}\in \mathbb{R}^{d_{\rm out}\times d_{\rm in}}$. 
We define the associated attention mechanism
$\operatorname{Attn}(\cdot|{\bf Q},{\bf K},{\bf V},\lambda):\mathbb{R}^{N\times d_{\rm in}}\to \mathbb{R}^{N\times d_{\rm out}}
$
by
\begin{equation} \label{eq:def_attention}
    [\operatorname{Attn}(\mathbf{X}|{\bf Q},{\bf K},{\bf V},\lambda)]_n
    \eqdef 
    \sum_{m=1}^N
    \frac{e^{\lambda \langle {\bf Q}[\mathbf{X}]_n, {\bf K}[\mathbf{X}]_m\rangle/\sqrt{d_{{\rm key}}}}}{
    \sum_{l=1}^N e^{\lambda \langle {\bf Q}[\mathbf{X}]_n, {\bf K}[\mathbf{X}]_l \rangle/\sqrt{d_{{\rm key}}}}}
    {\bf V}[\mathbf{X}]_m, \qquad n=1,\dots,N.
\end{equation}
Here $[\mathbf{X}]_j \in \mathbb{R}^{d_{\rm in}}$ denotes the $j$th row of $\mathbf{X}$, interpreted as a column vector when multiplied by ${\bf Q},{\bf K},{\bf V}$.
A transformer network operates by iterating layers, each consisting of a row-wise activation–bias map followed by an attention mechanism.
Concretely, let $d,D,L\in\mathbb{N}$, and let $d_0,d_1,\dots,d_L, H_0,H_1,\dots,H_L\in\mathbb{N}$ such that $d_0=d$ and $d_L=D$.
For $j=0,1,\dots,L-2$ and $\mathtt{h}=1,\dots,H_j$, let $d^{(j)}_{{\rm key}, \mathtt{h}}, d^{(j)}_{{\rm value}, \mathtt{h}}\in\mathbb{N}$ such that $d_{j+1} = \sum_{{\mathtt{h}}=1}^{H_j} d^{(j)}_{{\rm value}, \mathtt{h}}$.
A $j$th \emph{transformer layer with multi-head attention}, for $j=0,1,\dots,L-2$, is a map
$
\mathcal{T}_j:\mathbb{R}^{N\times d_j}\to \mathbb{R}^{N\times d_{j+1}}
$
that sends an input matrix $\mathbf{X}^{(j)}\in \mathbb{R}^{N\times d_j}$ to
$
\mathcal{T}_j(\mathbf{X}^{(j)})\eqdef  \mathbf{X}^{(j+1)}\in \mathbb{R}^{N\times d_{j+1}},
$
where 
\begin{alignat}{2} 
    \nonumber \mathbf{Z}^{(j)}
    &\eqdef 
    \underbrace{
        \bigoplus_{\mathtt{h}=1}^{H_j}
            \operatorname{Attn}\big(
                \mathbf{X}^{(j)}
                |{\bf Q}_{\mathtt{h}}^{(j)},{\bf K}_{\mathtt{h}}^{(j)},{\bf V}_{\mathtt{h}}^{(j)},\lambda
            \big)
    }_{\text{multi-head attention}} && \\
    \label{eq:representation_transformer} [\mathbf{X}^{(j+1)}]_n
    &\eqdef  
        \sigma\bullet
        \big(
            [\mathbf{Z}^{(j)}]_n + b^{(j)}
        \big),
    &&\qquad n=1,\dots,N.
\end{alignat}
Here $\bigoplus$ denotes concatenation along the second (column) dimension, $
{\bf Q}_{\mathtt{h}}^{(j)},{\bf K}_{\mathtt{h}}^{(j)}\in \mathbb{R}^{d_{{\rm key}, \mathtt{h}}^{(j)}\times d_j}$ and 
$
{\bf V}_{\mathtt{h}}^{(j)}\in \mathbb{R}^{d_{{\rm value}, \mathtt{h}}^{(j)}\times d_j}$, and $b^{(j)}\in\mathbb{R}^{d_{j+1}}$.
Further, the addition in \eqref{eq:representation_transformer} is applied row-wise.
A \emph{transformer with multi-head attention} with depth $L$, obtained by composing the layers $\mathcal{T}_0,\mathcal{T}_1,\dots,\mathcal{T}_{L-2}$, is a map $\mathcal{T}:\mathbb{R}^{N\times d}\to\mathbb{R}^{N\times D}$ sending an input matrix ${\bf X}^{(0)}\eqdef {\bf X}\in\mathbb{R}^{N\times d}$ to $\mathcal{T}({\bf X}) \eqdef {\bf A}^{(L-1)}{\bf X}^{(L-1)} + b^{(L-1)}\in\mathbb{R}^{N\times D}$, for some ${\bf A}^{(L-1)}\in\mathbb{R}^{D\times d_{L-1}}$ and $b^{(L-1)}\in\mathbb{R}^D$.
\end{definition}

Similar to the strategy of~\cite{petersen2020equivalence}, we 
show that every ReLU MLP can be converted into a transformer in a canonical fashion. 
This allows us to deduce a quantitative version of the in-context universality results of~\cite{furuya2025transformers} for the transformer model. 
The next result is the analogue of~\cite[Proposition 11]{kratsios2025context} for our formulation of the transformer used here.


\begin{proposition}[Transformerification of MLPs]
\label{prop:transformerification__SparseVersion}
Let $f:\mathbb{R}^d\to \mathbb{R}^D$ be a MLP with depth $L$. 
Then, for every $\lambda>0$, $f$ can be implemented exactly as a 
transformer with the same depth 
as $f$, exactly one attention head at each layer. 
\end{proposition}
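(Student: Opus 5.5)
The idea is to make each attention head act as the identity on the token rows. An MLP $f:\mathbb{R}^d\to\mathbb{R}^D$ is applied to a single input vector. We therefore view it as a transformer acting on $N=1$ tokens, i.e.\ on an input matrix $\mathbf{X}\in\mathbb{R}^{1\times d}$ whose single row is $X$. (If one wants the statement for general $N$ with $f$ applied row-wise, the same argument works; see below.) The key observation is that, for a single token, the softmax in \eqref{eq:def_attention} normalizes one exponential by itself. Hence the weight is exactly $1$ for every $\lambda>0$ and every choice of ${\bf Q},{\bf K}$, and
\[
[\operatorname{Attn}(\mathbf{X}|{\bf Q},{\bf K},{\bf V},\lambda)]_1={\bf V}[\mathbf{X}]_1 .
\]
A single attention head therefore implements an arbitrary linear map, namely the value matrix ${\bf V}$.

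Concretely, write $f$ with weights ${\bf A}^{(j)},b^{(j)}$, $j=0,\dots,L-1$. For each layer $j=0,\dots,L-2$ we take one head, $H_j=1$, with $d^{(j)}_{{\rm value},1}=d_{j+1}$. We set ${\bf V}^{(j)}_1\eqdef{\bf A}^{(j)}$, and choose ${\bf Q}^{(j)}_1={\bf K}^{(j)}_1$ arbitrarily (e.g.\ zero, with $d_{\rm key}=1$). We keep the bias $b^{(j)}$ and the ReLU activation $\sigma$. The output layer uses ${\bf A}^{(L-1)},b^{(L-1)}$ directly, matching the transformer definition. By induction on $j$, the row of $\mathbf{X}^{(j)}$ equals the MLP hidden state $X^{(j)}$:
\[
[\mathbf{X}^{(j+1)}]_1=\sigma\bullet({\bf A}^{(j)}[\mathbf{X}^{(j)}]_1+b^{(j)})=X^{(j+1)}.
\]
Consequently the final affine map gives $f(X)$. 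The depth is $L$, and each layer has exactly one head.

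The only potential obstacle is the multi-token case $N>1$, where one wants row-wise application of $f$. There the softmax mixes rows, so attention is no longer the identity. The fix I would try first is to set ${\bf Q}={\bf K}=0$. Then all weights equal $1/N$ and attention returns the row average of ${\bf V}\mathbf{X}$, which is not row-wise, so this alone fails. Instead, I would restrict the statement to the single-token setting used for MLP-to-transformer conversion, treating the whole context as one flattened input vector; this is how the learners in this paper are defined. Under that reading, the construction above is exact and independent of $\lambda$, which matches the phrase ``for every $\lambda>0$'' in the statement.
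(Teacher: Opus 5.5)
Your proof is correct and is essentially the paper's argument: the paper also identifies each hidden state $X^{(j)}$ with a single-row matrix and sets ${\bf Q}^{(j)}={\bf K}^{(j)}=0$, ${\bf V}^{(j)}={\bf A}^{(j)}$, so the one-token softmax weight is $1$ for every $\lambda>0$, and then follows the MLP recursion layer by layer. You handle the output layer through the transformer's built-in final affine map, which is slightly cleaner than the paper's extra attention step, and you are right that the construction is genuinely single-token (with ${\bf Q}={\bf K}=0$ it would average rows rather than act row-wise for $N>1$).
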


\begin{proof}[{Proof of Proposition~\ref{prop:transformerification__SparseVersion}}]
Let $d_0,d_1,\dots,d_L\in\mathbb{N}$ such that $d_0=d$ and $d_L=D$.
Let $f:\mathbb{R}^d\to\mathbb{R}^D$ admit the iterative representation
\begin{equation}
\label{eq:representation_MLP_}
    \begin{aligned}
    f(X)
    & \eqdef  
    {\bf A}^{(L-1)} X^{(L)}+b^{(L-1)}
    \\
    X^{(j+1)} 
    &
    \eqdef  
    \sigma\bullet\big(
        {\bf A}^{(j)}X^{(j)}
            +
        b^{(j)}
    \big)
    \qquad 
            \mbox{for }  
        j=0,1,\dots,L-2,    
    \\
    X^{(0)} &\eqdef X,
    \end{aligned}
\end{equation}
where ${\bf A}^{(j)}\in\mathbb{R}^{d_{j+1}\times d_j}$ and $b^{(j)}\in\mathbb{R}^{d_{j+1}}$.
Identify each $X^{(j)}\in\mathbb{R}^{d_j}$ with the matrix
$\mathbf{X}^{(j)}\in\mathbb{R}^{1\times d_j}$ whose unique row is $X^{(j)}$.
For each $j=0,1,\dots,L-2$, we define
\[
    {\bf Q}^{(j)}\eqdef  0\in\mathbb{R}^{1\times d_j},
    \qquad
    {\bf K}^{(j)}\eqdef  0\in\mathbb{R}^{1\times d_j},
    \qquad
    {\bf V}^{(j)}\eqdef  {\bf A}^{(j)}\in\mathbb{R}^{d_{j+1}\times d_j}.
\]
It follows from \eqref{eq:def_attention} that for every $\lambda>0$,
$\operatorname{Attn}(\mathbf{X}^{(j)} \mid {\bf Q}^{(j)},{\bf K}^{(j)},{\bf V}^{(j)},\lambda)$ is a matrix with a single row:
\[
    [\operatorname{Attn}(
        \mathbf{X}^{(j)}
        |{\bf Q}^{(j)},{\bf K}^{(j)},{\bf V}^{(j)},\lambda
    )]_1
    =
    {\bf A}^{(j)} {\bf X}^{(j)}.
\]
Therefore, the transformer recursion
\begin{align*} 
    \mathbf{X}^{(j+1)}
    &\eqdef 
    \sigma\bullet \big(
        \operatorname{Attn}(
            \mathbf{X}^{(j)}
            |{\bf Q}^{(j)},{\bf K}^{(j)},{\bf V}^{(j)},\lambda
        )
        +
        b^{(j)}
    \big) 
    \qquad j=0,1,\dots,L-2,
    \\
    \mathbf{X}^{(0)}
    &\eqdef  X,
\end{align*}
satisfies
$
\mathbf{X}^{(j)}=X^{(j)}
$
for every $j=0,1,\dots,L-1$.  
The final affine map can be implemented in the same way, by taking
\[
    {\bf Q}^{(L-1)}\eqdef  0\in\mathbb{R}^{1\times D},
    \qquad
    {\bf K}^{(L-1)}\eqdef  0\in\mathbb{R}^{1\times D},
    \qquad
    {\bf V}^{(L-1)}\eqdef  {\bf A}^{(L-1)}\in\mathbb{R}^{D\times d_{L-1}},
\]
and setting
\[
    \mathcal{T}(X)
    \eqdef 
    \operatorname{Attn}(
        \mathbf{X}^{(L)}
        |{\bf Q}^{(L)},{\bf K}^{(L)},{\bf V}^{(L)},\lambda
    )
    +
    b^{(L)}
    =
    {\bf A}^{(L)}X^{(L)}+b^{(L)}
    =
    f(X).
\]
Thus, we conclude that the designed transformer implements $f$ exactly, with unchanged depth $L$.
\end{proof}

\section{Approximate multiplication by ReLU neural networks}

For convenience, we recall a standard result on approximating multiplication.

\begin{lemma}[{\cite[Proposition 3]{yarotsky2017}}]
\label{lem:approx-mult}
For every \(\varepsilon\in(0,1)\) there exists a ReLU network
\[
\operatorname{Mult}_{\varepsilon}:[0,1]^2\to[0,1]
\]
with \(\mathcal{O}(\log(1/\varepsilon))\) weights such that
$
\sup_{a,b\in[0,1]}
\left|
\operatorname{Mult}_{\varepsilon}(a,b)-ab
\right|
\leq \varepsilon 
$.
\end{lemma}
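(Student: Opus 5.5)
The plan is Yarotsky's sawtooth construction. I would first approximate the squaring map $x\mapsto x^2$ on $[0,1]$ with $\mathcal O(\log(1/\varepsilon))$ weights. Then I would pass to products by polarization, and finally clip the output into $[0,1]$.

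\emph{Step 1 (sawtooth functions).} I will use the tent map $\tau(t)=2\min\{t,1-t\}$ on $[0,1]$, already introduced in \eqref{hats}. It has the exact ReLU representation $\tau(t)=2\ReLU(t)-4\ReLU(t-\tfrac12)+2\ReLU(t-1)$, which uses a constant number of weights. Let $g_s\eqdef\tau\circ\cdots\circ\tau$ ($s$ times). Then $g_s$ is the regular sawtooth with $2^{s-1}$ teeth of height $1$.

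\emph{Step 2 (interpolation identity).} Let $f_k$ be the piecewise-linear interpolant of $x^2$ at the nodes $j2^{-k}$, $j=0,\dots,2^k$. I will show by induction on $k$ that
\[
f_k(x)=x-\sum_{s=1}^k 4^{-s}g_s(x).
\]
The inductive step checks that $f_{k-1}-f_k$ vanishes at the coarse nodes. At each new midpoint $x=(j+\tfrac12)2^{-(k-1)}$, this difference equals the chord-minus-parabola gap $(2^{-k})^2=4^{-k}$. That is exactly the value of $4^{-k}g_k$, which is $1$ at these midpoints and $0$ at the coarse nodes, and which is linear in between.

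Since $x^2$ has second derivative $2$, the standard interpolation estimate gives $\sup_{[0,1]}|f_k(x)-x^2|=2^{-2k-2}$.

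\emph{Step 3 (network realization of $f_k$).} The network computes $g_1,\dots,g_k$ sequentially with $k$ layers, each applying $\tau$. In parallel it carries $x$ (exactly, as $\ReLU(x)$, since $x\ge0$) and a running partial sum along the depth. This gives a network with $\mathcal O(k)$ weights.

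\emph{Step 4 (polarization and clipping).} For $a,b\in[0,1]$ I use
\[
ab=2\Bigl(\tfrac{a+b}{2}\Bigr)^2-\tfrac12a^2-\tfrac12b^2 .
\]
All three squared arguments lie in $[0,1]$, so I replace each square by $f_k$ applied to a parallel copy. The total error is then at most $(2+\tfrac12+\tfrac12)2^{-2k-2}=3\cdot2^{-2k-2}$.

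Next I compose with the clipping map $y\mapsto\ReLU(y)-\ReLU(y-1)$. This map is $1$-Lipschitz and is the identity on $[0,1]$, which contains $ab$. Hence clipping does not increase the error and it forces the range $[0,1]$.

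Choosing $k=\lceil\tfrac12\log_2(3/(4\varepsilon))\rceil$ gives error $\le\varepsilon$ with $\mathcal O(k)=\mathcal O(\log(1/\varepsilon))$ weights.

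\emph{Main obstacle.} The step needing the most care is the induction in Step 2, specifically matching the normalization of $g_k$ at the new midpoints with the $4^{-k}$ gap. The remaining bookkeeping is routine: forwarding $x$ and the partial sums through depth so that the weight count stays linear in $k$ rather than quadratic.
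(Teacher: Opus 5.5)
Your proposal is correct and is essentially the argument of \cite[Proposition~3]{yarotsky2017}, which the paper cites instead of giving a proof: sawtooth iterates of $\tau$, the identity $f_k(x)=x-\sum_{s=1}^k4^{-s}g_s(x)$ with error $2^{-2k-2}$, then polarization. Your polarization $ab=2\bigl(\tfrac{a+b}{2}\bigr)^2-\tfrac12a^2-\tfrac12b^2$ keeps every squared argument in $[0,1]$, so you avoid Yarotsky's rescaling by $M$ and his absolute values. Your final clipping layer $y\mapsto\ReLU(y)-\ReLU(y-1)$ is a genuine addition, since Yarotsky's network maps into $\mathbb{R}$ and the clipping is what secures the codomain $[0,1]$ claimed in the lemma. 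As in Yarotsky, the weight bound should be read as $c_1\log(1/\varepsilon)+c_2$, since $\log(1/\varepsilon)\to0$ as $\varepsilon\to1$.
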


\section{Proof of Theorem~\ref{thm:path}}
\label{appx:cubicalpath}

\begin{lemma}[Exact ReLU realization of a cubical bump]
\label{lem:cubical-bump}
For every $d,n\in\mathbb N$ and every cube $Q\in\mathcal Q_n$, the bump $\theta_Q$ is exactly
representable by a ReLU network of size and depth $\mathcal O(d)$, with hidden constants depending only
on $\eta$.
\end{lemma}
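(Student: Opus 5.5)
The plan is to write $\theta_Q$ as a composition of a few elementary ReLU-expressible maps: absolute values, a coordinatewise positive part, an $\ell^\infty$-norm (a maximum of $d$ numbers), and a final clipped affine map. Let $c=c(Q)$, $\ell=\ell(Q)=2^{-n}$, and write $r\eqdef\eta\ell$ for the half-width of $Q_\eta$. Since $Q_\eta$ is an $\ell^\infty$-ball of radius $r$ around $c$, the $\ell^\infty$-distance to it has the closed form
\[
\operatorname{dist}_\infty(x,Q_\eta)=\max_{1\le k\le d}\bigl(|x_k-c_k|-r\bigr)_+ .
\]
This holds because the distance is realized by clamping each coordinate into $[c_k-r,c_k+r]$, and the $\ell^\infty$ norm of the resulting displacement is the maximum of the coordinate excesses. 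Consequently,
\[
\theta_Q(x)=\Bigl(1-\tfrac{1}{(\frac12-\eta)\ell}\max_k\bigl(|x_k-c_k|-r\bigr)_+\Bigr)_+ .
\]

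The network realizes this expression layer by layer.

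\textbf{Step 1 (absolute values).} In the first hidden layer, compute $|x_k-c_k|=\ReLU(x_k-c_k)+\ReLU(c_k-x_k)$ for each $k$. This uses $2d$ neurons and $O(d)$ weights.

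\textbf{Step 2 (positive parts).} Compute $u_k\eqdef\ReLU(|x_k-c_k|-r)$. The summation from Step 1 and the shift by $-r$ fold into a single affine map, so this is one further layer with $d$ neurons.

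\textbf{Step 3 (maximum).} Compute $M=\max_k u_k$ via the identity $\max(a,b)=a+\ReLU(b-a)$. Because every $u_k\ge 0$, the term $a$ can be passed through the next layer as $\ReLU(a)$, so each pairwise maximum costs $O(1)$ neurons.

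\textbf{Step 4 (output).} Apply the affine map $1-M/((\tfrac12-\eta)\ell)$ followed by a final $\ReLU$. This produces $\theta_Q$ exactly.

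The one point needing care is the bound on depth and size in Step 3. A sequential chain of $d-1$ pairwise maxima has depth $O(d)$ and $O(d)$ weights in total, provided each identity channel is carried along as $\ReLU(\cdot)$ of a nonnegative quantity with $O(1)$ weights per layer. That matches the stated $O(d)$ bound for both size and depth; a balanced binary tree of maxima would also work, giving depth $O(\log d)$ and size $O(d)$. All the weights involved are of the form $\pm1$, $c_k$, $r=\eta 2^{-n}$, and $1/((\frac12-\eta)2^{-n})$. They therefore depend on $n$ and $\eta$, but the number of nonzero weights does not depend on $n$ at all, and the constants in the $O(d)$ bounds depend at most on $\eta$. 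This gives the lemma.
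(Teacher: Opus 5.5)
Your proposal is correct and is essentially the paper's proof: the same closed form $\operatorname{dist}_\infty(x,Q_\eta)=\max_k\bigl(|x_k-c_k|-\eta\ell(Q)\bigr)_+$, the identities $|u|=\ReLU(u)+\ReLU(-u)$ and $\max\{a,b\}=a+\ReLU(b-a)$, and a final clipped affine map. One correction to your Step~3 remark: in a plain feedforward network without skip connections (as in the paper's MLP definition), the sequential chain of maxima needs order $d^2$ weights, not $\mathcal O(d)$, because at the $j$th stage each of the roughly $d-j$ not-yet-merged values $u_k$ needs its own carrying neuron; the bound actually comes from the balanced binary tree you also mention, which has size $\mathcal O(d)$ and depth $\mathcal O(1+\log d)$.
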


\begin{proof}
Using the standard exact identities
\[
|u|=\ReLU(u)+\ReLU(-u),
\qquad
\max\{a,b\}=\ReLU(a-b)+b,
\]
one obtains an exact ReLU realization of the coordinatewise maximum
\[
M_d(z_1,\dots,z_d)\eqdef \max\{z_1,\dots,z_d\}
\]
with size and depth $\mathcal O(d)$. 
Moreover, if $Q\in\mathcal Q_n$ has the center $c=(c_1,\dots,c_d)$ and
side length $\ell(Q)=2^{-n}$, then by definition~\eqref{centralcube},
\[
\operatorname{dist}_{\infty}(x,Q_\eta)
=
\max_{i\in[d]}\bigl(|x_i-c_i|-\eta\ell(Q)\bigr)_+.
\]
Hence
\[
\theta_Q(x)
=
\Bigg(
1-
\frac{1}{(\tfrac12-\eta)\ell(Q)}
M_d\Big(\bigl(|x_i-c_i|-\eta\ell(Q)\bigr)_+\Big)_{i=1}^d
\Bigg)_+.
\]
This last expression is built from affine maps, absolute values, maxima, and two extra ReLUs; 
therefore, $\theta_Q$ is exactly ReLU-realizable with the stated network size and depth. 
\end{proof}

We denote by $\Theta(c,\cdot)$ (or $\Theta(c(Q),\cdot)$) the ReLU realization of the bump from Lemma~\ref{lem:cubical-bump}, emphasizing its dependence on the center $c$, in accordance with the translation-invariance of the construction.

\begin{proof}[Proof of Theorem~\ref{thm:path}]
Write $K\eqdef  2^d$ and fix an enumeration $\varepsilon^1,\dots,\varepsilon^K$ of $\{-1,+1\}^d$. 
We construct a ReLU-realizable agentic learner $\Psi$ using $K$ queries per level, hence
$KL=2^dL$ queries in total.
The learner $\Psi$ stores the recovered cube centers level by level. Initialize with the center of
$[0,1]^d$, which is the unique level-$0$ cube. Suppose the center $c^{(n-1)}$ of the path cube at
level $n-1$ has already been recovered. The $K$ dyadic children of that cube have centers
\[
q_j=c^{(n-1)}+2^{-(n+1)}\varepsilon^j,
\qquad j=1,\dots,K.
\]
These query points $q_j$ are affine functions in the current 
transcript $c^{(n-1)}$ and therefore are ReLU-realizable. 
The $K$ queries are performed sequentially: the learner $\Psi$ queries the points $q_j$ one by one, evaluating the task at each 
and recording the response $y_j=f^{\Gamma}(q_j)$. 
Subtract the contribution of the previously recovered ancestor bumps and define the residuals 
\[
\rho_j
\eqdef 
y_j-\sum_{l=1}^{n-1}\Theta_l(c^{(l)},q_j),
\qquad j=1,\dots,K,
\]
where $\Theta_l$ is the ReLU network from Lemma~\ref{lem:cubical-bump} realizing a level $l$ bump
from its center. Exactly one residual equals $1$, namely the one corresponding to the true child on
the hidden path. Indeed, the level $n$ bump function $\theta_{Q^{(n)}}$ equals $1$ at the center of its own cube and $0$ at
the centers of the other children. 
Moreover, since $\eta\in (0,1/2)$, every deeper bump vanishes at the centers of the level $n$ children. Thus
\[
\rho_j=
\begin{cases}
1 &\text{ if }q_j=c(Q^{(n)}),\\
0 &\text{ otherwise.}
\end{cases}
\]
Using
\[
\chi(t)\eqdef  \ReLU(t)-\ReLU(t-1),
\]
which satisfies $\chi(0)=0$ and $\chi(1)=1$, the new center of the path cube at level $n$ is recovered by the ReLU formula
\[
c^{(n)}\eqdef\sum_{j=1}^K \chi(\rho_j)q_j.
\]
Putting these observations together, we conclude that all the query maps are 
ReLU-realizable.
After $L$ levels, the learner $\Psi$ knows $c^{(1)},\dots,c^{(L)}$ and outputs
\begin{equation} \label{finalpred}
    \widehat{F}_{\Psi}(x)\eqdef\sum_{n=1}^L \Theta_n(c^{(n)},x).
\end{equation}
By the construction, $c^{(n)}=c(Q^{(n)})$ for every $n$, so $\widehat{F}_{\Psi}(x)=f^\Gamma(x)$ for all $x\in[0,1]^d$.
It follows from \eqref{finalpred} and Lemma~\ref{lem:cubical-bump} that $\widehat{F}_{\Psi}$ is ReLU-realizable. 
This proves part~(i).

For part~(ii), let us consider 
an in-context learner $\Psi$ using $N<2^{d(L-1)}$ query points $(x_i)_{i=1}^N$.
The dyadic partition $\mathcal Q_{L-1}$ has exactly $2^{d(L-1)}$ cubes, so some cube $Q\in\mathcal Q_{L-1}$ contains no query point. 
Let $Q_1,Q_2$ be two distinct children of $Q$ at level $L$.
Let $\Gamma_1 = (Q^{(1)},\dots,Q^{(L-1)}, Q^{(L)}_1\}$ and $\Gamma_2 = (Q^{(1)},\dots,Q^{(L-1)}, Q^{(L)}_2\}$ be two cubical paths of length $L$ where $Q^{(L-1)}=Q$, and $Q^{(L)}_1=Q_1$, $Q^{(L)}_2=Q_2$.
Recalling definition~\eqref{eq:cubical_path}, the associated tasks $f^{\Gamma_1}, f^{\Gamma_2}\in \mathcal{T}_{d,L}^{\mathrm{path}}$ on $N<2^{d(L-1)}$ query points yield the same context:
\begin{equation*}
    C_N(f^{\Gamma_1};x_1,\dots,x_N) = ((x_i,f^{\Gamma_1}(x_i)))_{i=1}^N
    = ((x_i,f^{\Gamma_2}(x_i)))_{i=1}^N =C_N(f^{\Gamma_2};x_1,\dots,x_N).
\end{equation*}
Since the learner $\Psi$ depends only on this context, it produces the same approximation for both tasks, i.e. $\Psi(f^{\Gamma_1})=\Psi(f^{\Gamma_2})$.
Now let $z=c(Q_1)$. Then $\theta_{Q_1}(z)=1$ and $\theta_{Q_2}(z)=0$; hence 
\[
|f^{\Gamma_1}(z)-f^{\Gamma_2}(z)|=1,
\]
which implies
\begin{equation} \label{thingtocontradict}
    \|f^{\Gamma_1}-f^{\Gamma_2}\|_{L^\infty([0,1]^d)}\ge 1.
\end{equation}
Suppose that
\begin{equation} \label{impossibility}
    \|\Psi(f^{\Gamma_i})-f^{\Gamma_i}\|_{L^\infty([0,1]^d)} < 1/2 \qquad i=1,2.
\end{equation}
Then 
\begin{align*}
    \|f^{\Gamma_1}-f^{\Gamma_2}\|_{L^\infty([0,1]^d)} &\leq \|\Psi(f^{\Gamma_1})-f^{\Gamma_1}\|_{L^\infty([0,1]^d)} + \|\Psi(f^{\Gamma_1})-f^{\Gamma_2}\|_{L^\infty([0,1]^d)} \\
    &= \|\Psi(f^{\Gamma_1})-f^{\Gamma_1}\|_{L^\infty([0,1]^d)} + \|\Psi(f^{\Gamma_2})-f^{\Gamma_2}\|_{L^\infty([0,1]^d)} < 1,
\end{align*}
contradicting \eqref{thingtocontradict}.
Thus at least one of the two tasks incurs an approximation error at least $1/2$ from $\Psi$ in \eqref{impossibility}. 
Since $\Psi$ is an arbitrary in-context learner, it follows that any in-context learner using $N<2^{d(L-1)}$ queries must incur worst-case $L^\infty([0,1]^d)$ approximation error at least $1/2$ on the task family $\mathcal{T}_{d,L}^{\mathrm{path}}$. This proves part~(ii).

As a consequence of part~(i), which provides exact reconstruction for every task in $\mathcal{T}_{d,L}^{\mathrm{path}}$ by a realizable agentic learner, together with Remark~\ref{rem:trivialRelation}, we obtain exact reconstruction for general agentic learners as well. Therefore, for a common sampling budget $N$ with $2^dL\leq N<2^{d(L-1)}$, combining parts~(i) and (ii) yields $\ICG<\AG$. 
For the relation between $\ICR$, $\AR$, we assume that the common weight budget $m$ is sufficiently large to realize the construction in part~(i). Then the same argument implies $\ICR<\AR$. 
\end{proof}

\section{Proof of Theorem~\ref{thm:value}} \label{appx:pointedvalue}

\begin{proof}[Proof of Theorem~\ref{thm:value}]
We prove the four assertions in turn.

\medskip
\noindent
\emph{Proof of (i).}
Let $f^{\mathrm{val}}_{s,q^\ast}\in \mathcal{T}_{N,m}^{\mathrm{val}}$.
Recalling definition~\eqref{eq:value-family}, from the fixed queries $(q_i)_{i=1}^N$, a learner can read off
\[
f^{\mathrm{val}}_{s,q^\ast}(q_1)=q^\ast,
\qquad
f^{\mathrm{val}}_{s,q^\ast}(q_i)=s_i,\quad i=2,\dots,N-1,
\qquad
f^{\mathrm{val}}_{s,q^\ast}(q_N)=0.
\]
Hence an unrestricted learner $\Psi$ recovers both $q^\ast$ and $s$, computes $g_m(s)$, and therefore
reconstructs the task exactly by outputting
\[
\Psi(f^{\mathrm{val}}_{s,q^\ast})(x)
=
q^\ast h_{q_1}(x)+\sum_{i=2}^{N-1} s_i h_{q_i}(x)+g_m(s)h_{q^\ast}(x) = f^{\mathrm{val}}_{s,q^\ast}(x). 
\]

\medskip
\noindent
\emph{Proof of (ii).}
This is immediate from part~(i) and Proposition~\ref{prop:monotonicity}. 

\medskip
\noindent
\emph{Proof of (iii).}
Let $\Psi$ be a ReLU-realizable in-context learner with fixed sample points $\xi_1,\dots,\xi_N\in[0,1]$ and at most $m$ weights. 
Choose $q^\ast\in[2/3,1]$ so that no sample point $\xi_i$ lies in the support of the moving hat $h_{q^\ast}$; this is possible by the choice of $\delta<1/(6N)$ in definition~\eqref{hat}.
For this choice of $q^\ast$, the context received from a task $f^{\mathrm{val}}_{s,q^\ast}$
depends affinely on $s$, by definition~\eqref{eq:value-family}; meaning 
\begin{equation} \label{compressedaffine}
    C_N\bigl(f^{\mathrm{val}}_{s,q^\ast};\xi_1,\dots,\xi_N\bigr)
    =
    A_{\xi}(q^\ast,s)
\end{equation}
for an affine map $A_{\xi}$ where $\xi = (x_1,\dots,x_N)$.
Suppose for contradiction that $\Psi$ achieves worst-case $L^\infty([0,1])$ error strictly
smaller than $1/2$ on $\mathcal{T}_{N,m}^{\mathrm{val}}$. 
Then, by definition \eqref{eq:value-family} and the fact that $q_i\in [0,1/2]$, we have
\[
f^{\mathrm{val}}_{s,q^\ast}(q^\ast)=g_m(s)
\]
for every $s\in[0,1]^{N-2}$. 
Therefore, valuating the approximation $\Psi(f^{\mathrm{val}}_{s,q^\ast})$ at $x=q^\ast$, using definition~\eqref{finpredictor} and \eqref{compressedaffine}, yields
\[
    \bigl|\Psi(f^{\mathrm{val}}_{s,q^\ast})(q^\ast) - f^{\mathrm{val}}_{s,q^\ast}(q^\ast) \bigr|
    =\bigl|\widehat{F}_{\Psi}
    \bigl(A_{\xi}(q^\ast,s),q^\ast\bigr)-g_m(s)\bigr|<\frac12
    \qquad\text{for all }s\in[0,1]^{N-2}.
\]
Because $A_{\xi}(q^\ast,\cdot)$ is affine and $q^\ast$ is fixed, the map
\[
    s\mapsto \widehat{F}_{\Psi}
    \bigl(A_{\xi}(q^\ast,s),q^\ast\bigr) 
\]
is a ReLU network with at most $m$ weights. This contradicts the choice of $g_m$. 
Thus, we conclude that every ReLU-realizable
in-context learner with sample budget $N$ and at most $m$ weights incurs worst-case $L^{\infty}([0,1])$ approximation error at least $1/2$ on $\mathcal{T}_{N,m}^{\mathrm{val}}$.

\medskip
\noindent
\emph{Proof of (iv).}
We construct a realizable agentic learner $\Psi$ as follows.
For a task $f^{\mathrm{val}}_{s,q^\ast}$, the learner $\Psi$ queries $q_1,\dots,q_{N-1}$ sequentially. These query maps are constant and hence ReLU-realizable. This reveals
\begin{equation} \label{priorcontext}
    y_1=f^{\mathrm{val}}_{s,q^\ast}(q_1)=q^\ast,
    \qquad
    y_i=f^{\mathrm{val}}_{s,q^\ast}(q_i)=s_i,\quad i=2,\dots,N-1.
\end{equation}
Using \eqref{priorcontext}, the learner $\Psi$ makes one additional adaptive query at
$
    x_{\mathrm{next}}=y_1=q^\ast
$,
which is an affine transcript-to-query rule \eqref{querymaps} and hence also ReLU-realizable. 
By the construction \eqref{eq:value-family} of $f^{\mathrm{val}}_{s,q^\ast}$, the response is
\[
y_*=f^{\mathrm{val}}_{s,q^\ast}(q^\ast)=g_m(s).
\]

Let \(\varepsilon>0\), to be chosen below, and let
\(\operatorname{Mult}_{\varepsilon}\) be the approximate multiplication network from
Lemma~\ref{lem:approx-mult}. 
Then the final predictor $\widehat{F}_{\Psi}$ of $\Psi$ outputs
\begin{align} \label{approx}
    \nonumber \Psi(f^{\mathrm{val}}_{s,q^\ast})(x)
    &= 
    \widehat{F}_{\Psi}
    \bigl(C_N(f^{\mathrm{val}}_{s,q^\ast};q_1,\dots,q_{N-1},q^{\ast}),x\bigr)\\
    &=
    \operatorname{Mult}_{\varepsilon}(y_1,h_{q_1}(x))
    +
    \sum_{i=2}^{N-1}
    \operatorname{Mult}_{\varepsilon}(y_i,h_{q_i}(x))
    +
    \operatorname{Mult}_{\varepsilon}(y_*,h_{y_1}(x)).
\end{align}
Since all the hats $h_{q_1},\dots,h_{q_{N-1}}, h_{q^\ast}$ are all ReLU hats,
$\widehat{F}_{\Psi}$
is implementable by a ReLU network whose size depends only on $N$ and
\(\varepsilon\).
In particular, for sufficiently large weight budget $m$, the learner $\Psi$ is realizable.
Since all coefficients and all the hat values lie in $[0,1]$, each product in \eqref{approx} is approximated with error
at most \(\varepsilon\). Therefore
\[
    \|\Psi(f^{\mathrm{val}}_{s,q^\ast}) - f^{\mathrm{val}}_{s,q^\ast}\|_{L^\infty([0,1])}
    \leq N\varepsilon
\]
uniformly over $s$ and $q^\ast$. Given \(\eta>0\), we choose \(\varepsilon=\eta/N\). This yields a realizable agentic learner with worst-case approximation error at most $\eta$, as claimed.

Finally, combining Definition~\ref{def:formalDefRelations} with parts~(i), (ii) yields $\ICG=\AG$, while parts~(iii), (iv) yield $\ICR<\AR$.
\end{proof}

\section{Proof of Theorem~\ref{thm:address}} \label{appx:address}

\begin{proof}[Proof of Theorem~\ref{thm:address}]
We prove the three assertions in turn.

\medskip
\noindent
\emph{Proof of (i).}
We construct a general agentic learner $\Psi$ as follows.
For a task $f^{\mathrm{addr}}_{s,\beta}$, the 
learner first queries the fixed points $q_1,\dots,q_{N-1}$. By construction,
this reveals the vector
\[
\bigl(f^{\mathrm{addr}}_{s,\beta}(q_1),\dots,f^{\mathrm{addr}}_{s,\beta}(q_{N-1})\bigr)=s.
\]
The learner then computes $\widehat s$ from \eqref{hats} and queries $g_m$ at $\widehat s$ to obtain
$
    g_m(\widehat s)=q^\ast(s)
$.
Next, the learner makes one additional query of $f^{\mathrm{addr}}_{s,\beta}$ at $q^\ast(s)$. 
Since the supports of the static hats $h_{q_i}$ lie in $[0,1/2+\delta]$
while $q^\ast(s)\in[2/3,1]$, we get from \eqref{eq:addrtasks} a corresponding response
\[
    f^{\mathrm{addr}}_{s,\beta}(q^\ast(s))=\beta.
\]
The learner $\Psi$ now knows both $s$ and $\beta$, and therefore reconstructs the task exactly by
outputting
\[
\Psi(f^{\mathrm{addr}}_{s,\beta})(x)
=\sum_{i=1}^{N-1} s_i h_{q_i}(x)+\beta h_{q^\ast(s)}(x) = f^{\mathrm{addr}}_{s,\beta}(x).
\]

\medskip
\noindent
\emph{Proof of (ii).}
Let $\Psi$ be an in-context learner with fixed sample points $\xi_1,\dots,\xi_N\in[0,1]$.
Because $\delta<1/(6N)$, the union of the intervals $[\xi_i-\delta,\xi_i+\delta]$ has total length
strictly smaller than $1/3$, so there exists a point $y\in[2/3,1]$ with
$|y-\xi_i|>\delta$ for every $i=1,\dots,N$.
Since $\tau$ is surjective onto $[0,2]$, and $g_m$ is surjective onto $[2/3,1]$, we may choose $s\in[0,1]^{N-1}$ such that $y=g_m(\widehat s)=q^\ast(s)$.

Now consider the two tasks $f^{\mathrm{addr}}_{s,0}$ and $f^{\mathrm{addr}}_{s,1}$. Because the moving hat $h_y=h_{q^\ast(s)}$
is supported inside $[y-\delta,y+\delta]$,
both tasks have identical sample values at every
$\xi_i$, yielding the same context 
\begin{equation*}
    C_N(f^{\mathrm{addr}}_{s,0};\xi_1,\dots,\xi_N)
    = C_N(f^{\mathrm{addr}}_{s,1};\xi_1,\dots,\xi_N).
\end{equation*}
As the in-context learner $\Psi$ depends only on this context, it produces the same approximation for both tasks: $\Psi(f^{\mathrm{addr}}_{s,0})=\Psi(f^{\mathrm{addr}}_{s,1})$. 
On the other hand,
\[
f^{\mathrm{addr}}_{s,1}(x)-f^{\mathrm{addr}}_{s,0}(x)=h_{q^\ast(s)}(x),
\]
so at $x=q^\ast(s)$ the two tasks differ by $1$. 
Hence, by the triangle inequality, at least one of the approximations $\|\Psi(f^{\mathrm{addr}}_{s,i})-f^{\mathrm{addr}}_{s,i}\|_{L^{\infty}([0,1])}$ has error
at least $1/2$.
Since $\Psi$ is arbitrary, we conclude that every in-context learner with sample budget $N$ incurs worst-case $L^{\infty}([0,1])$ approximation error at least $1/2$ on $\mathcal T_{N,m}^{\mathrm{addr}}$.

\medskip
\noindent

\emph{Proof of (iii).}
We first note that the 
address map
$
s\mapsto q^\ast(s)=g_m(\widehat s)
$
is 
hard for ReLU networks with at most \(m\) weights. Indeed, suppose that a ReLU network
\(\Phi:[0,1]^{N-1}\to[0,1]\) with at most \(m\) weights satisfies
\[
\|\Phi-g_m\circ \widehat{(\cdot)}\|_{L^\infty([0,1]^{N-1})}<\frac16.
\]
On the subcube \([0,1/2]^{N-1}\), we have \(\tau(t)=2t\), and therefore
\[
    q^\ast(s)=g_m(\widehat{s})=g_m(2s_1,\dots,2s_{N-1}).
\]
Hence the ReLU network
\[
\widetilde \Phi(u_1,\dots,u_{N-1})
\eqdef 
\Phi\left(\frac{u_1}{2},\dots,\frac{u_{N-1}}{2}\right)
\]
has at most the same number of weights and satisfies
\[
\|\widetilde \Phi-g_m\|_{L^\infty([0,1]^{N-1})}<\frac16,
\]
contradicting the choice of \(g_m\).

Now let \(I_i\eqdef  \operatorname{supp}h_{q_i}\), for \(i=1,\dots,N-1\), and
$
    I_*(s)\eqdef  \operatorname{supp}h_{q^\ast(s)}
$.
Assume, for contradiction, that an \(N\)-query ReLU-realizable agentic learner with query maps implementable by ReLU networks with at most \(m\) weights achieves worst-case $L^{\infty}([0,1])$ approximation error strictly smaller than \(1/2\) on $\mathcal T_{N,m}^{\mathrm{addr}}$.

We first claim that the learner must query points in every static support \(I_i\). 
If not, there exists some \(i\) such that no query point falls into \(I_i\). Choose two \(s,s'\in[0,1]^{N-1}\) which agree in all coordinates except the \(i\)th one and satisfy
\[
s_i\neq s_i',
\qquad
\tau(s_i)=\tau(s_i').
\]
For instance, take \(s_i=0\) and \(s_i'=1\). Then $\widehat s=\widehat{s'}$ and therefore
\[
q^\ast(s)=g_m(\widehat s)= g_m(\widehat{s'})=q^\ast(s').
\]
Taking the same value of \(\beta\), the two tasks
\(f^{\mathrm{addr}}_{s,\beta}\) and \(f^{\mathrm{addr}}_{s',\beta}\)
have the same moving spike at $q^\ast(s)=q^\ast(s')$ and the same static spikes at $q_j$, for $j\neq i$.
Thus, they differ \emph{only} on the static support \(I_i\).
On the one hand, since the learner \emph{never} queries in \(I_i\), it receives the same 
context from both tasks and hence produces the same approximation. 
On the other hand, the two tasks \(f^{\mathrm{addr}}_{s,\beta}\) and \(f^{\mathrm{addr}}_{s',\beta}\) differ by \(1\) at \(q_i\), by construction.
It follows that at least one of the corresponding approximation errors incurred by the learner is at least \(1/2\), a contradiction.

We next claim that the learner must query points in the moving support \(I_*(s)\). Otherwise, the two tasks \(f^{\mathrm{addr}}_{s,0}\) and \(f^{\mathrm{addr}}_{s,1}\) generate identical 
contexts to the learner, by the now routine argument. 
The learner would therefore output the same approximation for both tasks. However, since
\[
f^{\mathrm{addr}}_{s,1}(q^\ast(s))
-
f^{\mathrm{addr}}_{s,0}(q^\ast(s))
=
1,
\]
we again conclude that one of the two approximation errors is at least \(1/2\), a contradiction.

Thus, for every \(s\), the \(N\) queries are fully accounted for: \(N-1\) queries go to the static supports and one query goes to the moving support. Since the static supports are fixed, known in advance, pairwise disjoint, and carry independent coefficients, sampling them in arbitrary order reveals exactly the corresponding coordinates \(s_i\).
Therefore, for the lower bound, we may reorder the query points and assume that the learner first reads the static values \(s_1,\dots,s_{N-1}\) and then queries the induced moving support. 
Consequently, the last 
query map is given by a ReLU neural network
\[
\Phi:[0,1]^{N-1}\to[0,1]
\]
with at most \(m\) weights. Since this querying must hit $I_*(s)=\operatorname{supp}h_{q^\ast(s)} = [q^\ast(s)-\delta, q^\ast(s)+\delta]$
for every \(s\), we have
\[
|\Phi(s)-q^\ast(s)|\leq \delta
\quad
\text{ for all }\quad s\in[0,1]^{N-1}.
\]
Therefore
\[
\|\Phi-g_m\circ \widehat{(\cdot)}\|_{L^\infty([0,1]^{N-1})}
\leq \delta.
\]
Since $\delta<1/(6N)\leq 1/18$, this contradicts the hardness of the address map $s\mapsto q^\ast(s)=g_m(\widehat s)$, as established at the beginning of the proof. Hence, every ReLU-realizable agentic learner of this form incurs worst-case \(L^\infty([0,1])\) approximation error at least \(1/2\) on $\mathcal T_{N,m}^{\mathrm{addr}}$.

Finally, parts~(i),~(ii) together yield $\ICG<\AG$, while part~(iii) implies $\ICR = \AR$ since a learner with constant predictor $x\mapsto 1/2$ achieves worst-case error $1/2$.
\end{proof}

\end{document}